\documentclass[10pt,journal,compsoc]{IEEEtran}

\usepackage[nocompress]{cite}
\usepackage[pdftex]{graphicx}
\usepackage{epsfig}
\usepackage{amsmath}
\usepackage{amsthm}
\usepackage{mathtools}
\usepackage{amssymb}
\usepackage{mathrsfs}
\usepackage{bbm}
\usepackage{bbold}
\usepackage[ruled,noend]{algorithm2e}
\usepackage{float}
\usepackage{stfloats}
\usepackage{enumitem}
\usepackage[caption=false,font=footnotesize,labelfont=sf,textfont=sf]{subfig}
\usepackage{multirow}
\usepackage{url}
\usepackage{pifont}
\usepackage{colortbl}

\def\tailsize{\tau}
\def\reals{\mathbb{R}}
\newcommand{\xmark}{\ding{55}}
\long\def\comment#1{}
\long\def\commentout#1{}
\newtheorem{definition}{Definition}
\newtheorem{theorem}{Theorem}

\hyphenation{op-tical net-works semi-conduc-tor}

\begin{document}
\title{A Review of Open-World Learning and Steps Toward Open-World Learning Without Labels}

\author{
Mohsen~Jafarzadeh,~\IEEEmembership{}
Akshay~Raj~Dhamija,~\IEEEmembership{}
Steve~Cruz,~\IEEEmembership{}
Chunchun~Li,~\IEEEmembership{}
Touqeer~Ahmad,~\IEEEmembership{}
Terrance~E.~Boult~\IEEEmembership{}

\thanks{M. Jafarzadeh, A.R. Dhamija, S. Cruz, C. Li, T. Ahmad, T. E. Boult are with the VAST lab, University of Colorado Colorado Springs, Colorado Springs, Colorado 80918, USA.}

\thanks{E-mail: tboult@vast.uccs.edu} 
}

\markboth{November~2021. ~A Review of Open-World Learning and Steps Toward Open-World Learning Without Labels}%
{Jafarzadeh \MakeLowercase{\textit{et al.}}: A Review of Open-World Learning and Steps Toward Open-World Learning Without Labels}

\IEEEtitleabstractindextext{%
\begin{abstract}
In open-world learning, an agent starts with a set of known classes, detects, and manages things that it does not know, and learns them over time from a non-stationary stream of data.   Open-world learning is related to but also distinct from a multitude of other learning problems and this paper briefly analyzes the key differences between a wide range of problems including incremental learning, generalized novelty discovery, and generalized zero-shot learning. 
This paper formalizes various open-world learning problems including open-world learning without labels. These open-world problems can be addressed with modifications to known elements, we present a new framework that enables agents to combine various modules for novelty-detection, novelty-characterization, incremental learning, and instance management to learn new classes from a stream of unlabeled data in an unsupervised manner, survey how to adapt a few state-of-the-art techniques to fit the framework and use them to define seven baselines for performance on the open-world learning without labels problem. We then discuss open-world learning quality and analyze how that can improve instance management. We also discuss some of the general ambiguity issues that occur in open-world learning without labels.
\end{abstract}

\begin{IEEEkeywords}
Open-World Learning, Unsupervised Learning, Self-Supervised Learning, Incremental Learning, Novelty Discovery, Classification.
\end{IEEEkeywords}}

\maketitle

\IEEEdisplaynontitleabstractindextext

\IEEEpeerreviewmaketitle

\IEEEraisesectionheading{\section{Introduction}\label{sec:introduction}}

\IEEEPARstart{B}{abies} can detect novel objects and learn them even if they are not given semantic labels. Online vision-based systems, autonomous robots, and self-driving vehicles may confront new object classes in areas and must learn to deal with them even if they do not know the semantic label. These systems should first detect any new objects which are not in the training set. Then they should distinguish and learn new classes for these objects.  Also, they should recognize the new classes when they see them again. Ideally, each of the above steps should be done in an unsupervised manner.

\begin{figure*}[!t]
\centering
\includegraphics[width=.9\linewidth]{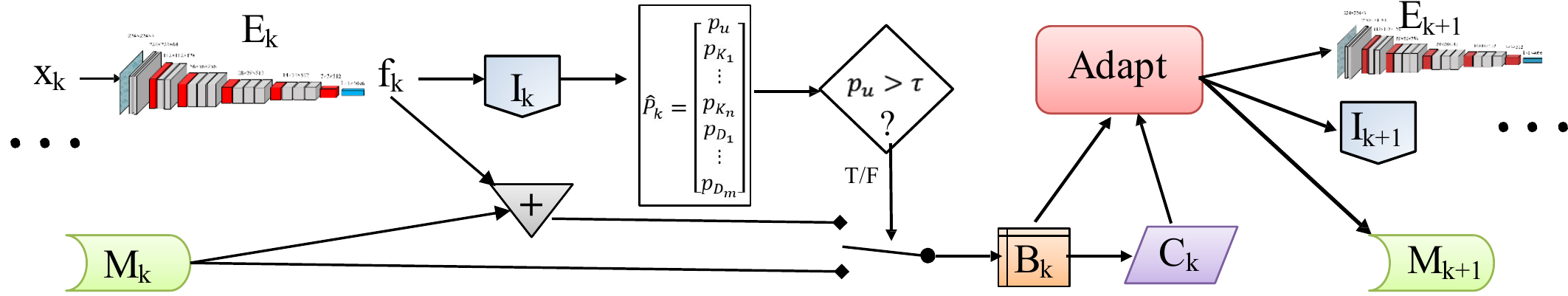}
\caption{Function block diagram of open-world learning. At time step $k$, agent $\mathcal{A}_k$ can be modeled with a memory $M_k$, a perception subsystem or feature extractor $E_k$, and a decision making or inference subsystem  $I_k$. The agent acts on open-world stream $\mathcal{S}^O$, see Eq. (\ref{eq_open_world_stream}). At time step $k$, feature extractor $E_k$ converts data $x_k \in \mathcal{S}^O$ to feature $f_k$. Then inference subsystem  $I_k$ predicts probabilities of data belonging to unknowns, knowns, and discovered classes, $P_k = [p_u \;\;\; p_{K_1} \;\;\; p_{K_2} \;\;\; \dots \;\;\; p_{K_{n-1}} \;\;\; p_{K_n} \;\;\; p_{D_1} \;\;\; p_{D_2} \;\;\; \dots \;\;\; p_{D_{{m_k}-1}} \;\;\; p_{D_{m_k}} ]^T$, where $n$ is the number of known classes in the training set and $m_k$ is the number of discovered classes. If probability of unknown $p_u$ is less than a threshold $\tau$, then the buffer $B_k$ is equal to the memory $M_k$, otherwise,  the buffer $B_k$ is equal to the concatenation of the feature $f_k$ and the memory $M_k$. Next, each instance of the buffer $B_k$, gets a label at function $C_K$ either supervised (human or other agents) or unsupervised via novelty discovery. Finally,  the agent $\mathcal{A}_k$ will be updated to $\mathcal{A}_{k+1}$ based on the buffer $B_k$ and the supervised/unsupervised labels $C_K$. The agent $\mathcal{A}_{k+1}$ will be used in  the next time step $k+1$.}
\label{fig_OWL}
\end{figure*}

While in \cite{bendale2015towards}, our lab provided the initial formal definition of open-world learning, that work and subsequent work on open-world vision \cite{rudd2017extreme} has supported detection of unknowns, but the learning of new items used fully supervised labeling of all new items.  In addition, prior work/formulations cannot directly address open-world learning without labels. Herein, we formalize a wide range of open-world related problems, survey components that might be used in solving them, and investigate open-world unsupervised class incremental learning of image classifiers for autonomous agents. Our motivation is to build fundamentals and formalize \textbf{open-world learning without labels} to be used along with other theories and solutions in the design of autonomous open-world learning robots in the future.

While few papers use the phrase open-world other senses, e.g to mean recognition with long-tail distributions\cite{liu2019large}, we follow in the spirit of \cite{bendale2015towards}, where operating in an open-world requires learning about previously unknown items, not just detecting or ignoring the unknown items.
Fig. \ref{fig_OWL} demonstrates a general cycle of open-world learning, a generalization of the model of open-world recognition presented in \cite{bendale2015towards}.
In open-world learning, agents start from an initial (potentially pre-trained) model. The agents confront a continuous stream of data that contains a mixture of known and unknown objects. The agent should (1) classify inputs into a known class or declare as unknown, (2) manage unknown instances to decide when there is enough data to learn, discover and distinguish between different classes of unknowns, (4) verify the quality of each cluster of unknowns, (5) manage and select the clusters of unknowns for generating pseudo-labels for adaption, and (6) learn newly qualified cluster of unknowns without forgetting previously learned classes. The major differences from prior work are the explicit step to manage which unknowns have sufficient information to support more processing/learning, and the explicit pseudo-labeling needed for the incremental learning without labels. This model also supports the potential to update feature representation in the adapt stage, although that aspect is not pursued in this paper.

While some recent surveys \cite{geng2020recent,boult2019learning} have looked at open-set and open-world learning, those works did not really differentiate these problems from the many related problems/algorithms which can be viewed as building blocks for open-world learning.  This paper analyzes over 100 related papers and puts them into context in terms of the key elements needed for open-world learning.  By looking at the properties needed vs provided by other researchers, our analysis also identifies gaps where there are opportunities for new research efforts.

The contributions of this paper include:
\begin{itemize}[topsep=0pt,itemsep=-1ex,partopsep=1ex,parsep=1ex]

\item Describing the difference between open-world learning and related fields, e.g., open-set recognition, generalized novelty discovery, incremental learning, generalized zero-shot learning, etc. See Table \ref{table_compare_problems};

\item Formalizing open-world learning without labels, also known as unsupervised open-world learning;

\item Presenting a general framework for building systems for OWL with labels;

\item Surveying state-of-the-art building blocks and proposing seven simple baseline agents that discover, characterize, and learn new classes without labels from an open-world stream of data by modifying Nearest Mean Classifier (NCM) \cite{guerriero2018ncm}, Nearest Non Outlier (NNO) \cite{bendale2015towards}, Gaussian Mixture Model (GMM) \cite{arandjelovic2005gmm}, Centroid-Based Concept Learning (CBCL) \cite{ayub2020cbcl}, Scaling Incremental Learning (SCAIL) \cite{belouadah2020scail}, and Extreme Value Machine (EVM) \cite{rudd2017extreme};

\item Extending our framework with adaptive management of unlabeled unknowns yielding a family of True Open-world Learners ;

\item Designing a new framework to evaluate performance in both supervised and unsupervised open-world scenarios;

\item Investigating metrics to measure the quality of open-world learning;

\item Showing that on ImageNet scale experiments, that a simple combination of standard components is statistically significantly weaker than the true open-world learners which include management of unknown items using our quality measure. 

\end{itemize}

\section{Open-World Learning Formalization}
\label{sec_formal}

While there has been growing work in open-set and open-world, few papers have formalized their problems and most have been inconsistent with the definitions of open-set in \cite{scheirer2012toward} or the open-world in \cite{bendale2015towards}, hence we believe there is value in offering new broader formal definitions and expanding them to a wide range of subclasses of work related to open-world learning. In the next section, we formalize multiple variants of open-world learning. In particular the definitions in  \cite{scheirer2012toward,bendale2015towards,rudd2017extreme} all required algorithms to have {\em provably bounded open-space risk}. In our refined definitions, we take that for granted as we prove, in the supplemental material, that for modern deep features from networks with L2 normalization, bounded inputs must produce bounded features, and hence any algorithms can generally claim bounded open-space risk. Furthermore and more practically, it has been suggested, see \cite{dhamija2018reducing}, that deep networks feature overlap is at least as big an issue, if not more of a problem, than long-distance features. In all other respects, our definitions are consistent, but we refine these earlier definitions and expand to new problems to address multiple variations of agents that engage in open-world learning.

We start the formalization by defining the different types of data: training set, known set, unknown set, generated set, auxiliary set, ignored set,  buffer set, and discovered set. Also, we define seen and unseen partitions of the aforementioned sets. Then we present a formalization of supervised, unsupervised, out-of-label, and out-of-distribution learning which are all well-known related but distinct problems. Then we define the open-world stream of data and the various types of open-world learning.   Examples of research in each category are provided in the next section and in rows in our table of related work, Table \ref{table_compare_problems}.

One of the complexities of open-world learning is that over time data that was initially unknown can be labeled and become known, so we must carefully define the data sets as a function of time.   Let us define the $m$'th class of a known set at time step $k \in \mathbb{W}$ as $k_{k,m}$,  $n$'th class unknown at time $k$ as $u_{k,n}$,  $h$'th class generated with $g_{k,h}$,  $a$'th class auxiliary with $x_{k,a}$, $i$'th class of training set with $t_{k,i}$, $s$'th class detected as novel and kept in buffer with $b_{k,s}$, $r$'th class detected as novel but ignored set with $i_{k,r}$, and $t$'th class discovered with $d_{k,t}$. The ignored set, buffer set, and discovered set are items used in various algorithms/settings and are derivative of known and unknown sets, i.e., they do not have an independent identity. Some known data points can be incorrectly predicted as unknown.  The known, unknown, generated, and auxiliary sets can be partitioned into seen and unseen sets, i.e., $k_{k,m} = k^{seen}_{k,m} \cup k^{unseen}_{k,m}$, $u_{k,n} = u^{seen}_{k,n} \cup u^{unseen}_{k,n}$, $g_{k,h} = g^{seen}_{k,h} \cup g^{unseen}_{k,h}$, and $x_{k,a} = x^{seen}_{k,a} \cup x^{unseen}_{k,a}$.   The ignored set, buffer set, and discovered set do not have unseen elements.

By getting union at time step $k$, we can define training set $T_k = \cup_1^{t_k} t_{k,m}$, known set $K_k = \cup_1^{m_k} k_{k,m}$, unknown set $U_k = \cup_1^{n_k} u_{k,n}$, generated set $G_k = \cup_1^{h_k} g_{k,h}$, auxiliary set $X_k = \cup_1^{a_k} x_{k,a}$, ignored set $I_k = \cup_1^{i_k} i_{k,i}$, buffer set $B_k = \cup_1^{b_k} b_{k,s}$, and discovered set $D_k = \cup_1^{t_k} d_{k,t}$.  Similarly, the four sets can be partitioned into seen and unseen sets, i.e.,  $K_{k} = K^{seen}_{k} \cup K^{unseen}_{k}$, $U_{k} = U^{seen}_{k} \cup U^{unseen}_{k}$, $G_{k} = G^{seen}_{k} \cup G^{unseen}_{k}$, and $X_{k} = X^{seen}_{k} \cup X^{unseen}_{k}$.  The discovered set $D_k$ can be viewed as a subset of either known or unknown. The world is $\mathcal{W}_k = K_{k} \cup U_{k}$. The universe is $\mathcal{U}_k = K_{k} \cup U_{k} \cup X_{k} \cup G_{k} \cup X_{k}$. Both word and universe can be partitioned to seen and unseen, i.e.,  $\mathcal{W}_{k} = \mathcal{W}^{seen}_{k} \cup \mathcal{W}^{unseen}_{k}$ and  $\mathcal{U}_{k} = \mathcal{U}^{seen}_{k} \cup \mathcal{U}^{unseen}_{k}$.

Let us show human labeling function with $\ell(.)$, generated labeling function with $\breve{\ell}(.)$, and pseudo labeling function with $\hat{\ell}(.)$. A data point in the training set $x \in T_k$ can have human label $y = \ell(x)$,  generated label $\breve{y} = \breve{\ell}(x)$, or pseudo label $\hat{y} = \hat{\ell}(x)$. Let us define  $L_k = \{ \ell(x) | x \in T_k \}$,  $\breve{L}_k = \{ \ell(x) | x \in T_k \}$, and  $\hat{L}_k = \{ \ell(x) | x \in T_k \}$. If $L_k$ is given, we call it supervised learning or learning with label. If both $L_k$ and $\breve{L}$ is given, we call it generative supervised learning. If for $k \geq 1$, $L_k$ is hidden and only agent has access to $\hat{L}_k$, we call it unsupervised learning, self-supervised learning, or learning with no label. The case that $L_k$ is hidden for all $L_k$ is fully unsupervised learning , which is out of scope of this paper.
Time step $k=0$ is  called pre-training, warm up, base session, or pre-streaming. If $\forall k \in \mathbb{N}  \; \forall x \in T_k \exists n \in \mathbb{N} \; \textup{s. t.} \;  x \in U_{k-n} $, we call it learning using out-of-label. In this paper, we focus on the more commonly considered out-of-distribution, i.e., out-of-label is beyond scope of this paper.
 
\begin{definition}{Open-world stream}
\label{def_stream}
\\ Stream is a time series that each member is a single or batch of data point(s) drawn from the query set $Q_k$ at time step $k\geq 1$, i.e., 
 \begin{equation}
\mathcal{S} = \{ S_k \subset Q_k \;\; | \;\;   \forall k \in \mathbb{N} \} 
\end{equation}
 \begin{equation}
\mathcal{S}_k = \{ x_{k,j} \in Q_k  \;\; | \;\;    1 \leq j \leq j_k,  j \in \mathbb{N}, j_k \in \mathbb{N}  \} 
\end{equation}
The base session or pre-training, i.e., $k=0$, does not belong to the stream. If $j_k = 1$ for all $k$, then it is a data point stream, otherwise, it is batch stream. Generally, there is not any condition that $j_k = j_{k+1}$ or $j_k \neq j_{k+1}$. 
The closed-set stream is a time series of both seen and unseen drawn from pre-training known session classes $K_0$:
\begin{equation}
\mathcal{S}_k^C = \{ x_{k,j} \in Q_k \;\; |  \;\; Q_k \subseteq K_0\} 
\end{equation}
The out-of-label stream is a time series of both seen and unseen that draw from pre-training word session classes $\mathcal{W}_0$:
\begin{equation}
\mathcal{S}_k^L = \{ x_{k,j} \in Q_k \;\; |  \;\; Q_k \subseteq \mathcal{W}_0 \}
\end{equation}
The open-world stream is a time series of both seen and unseen (usually unseen) that 
\begin{equation}
\label{eq_open_world_stream}
\mathcal{S}_k^O = \{ x_{k,j} \in Q_k | \;\; Q_k \subseteq \mathcal{W}_k  \}
\end{equation}
The incremental label stream is a time series of label of past time steps
\begin{equation}
\mathcal{L}_k^I = \{  y_{k-1,j}   \; \; \forall j \leq n_{k-1} \;\; |  \;\; y_{k-1,j} = \ell(x_{k-1,j})  \} 
\end{equation}
If $\forall (k, \tau) \;\; \mathcal{L}_k^I \cap \mathcal{L}_{k-\tau}^I = \phi$, then the stream is called curriculum incremental label stream.
The active label stream is a time series of label of past time steps requested by agent
\begin{equation}
\mathcal{L}_k^A = \{  y_{k-1,j}   \;\; |  \;\;  \forall j \in H_{k-1}  \} 
\end{equation}
where $H$ is set of requested label. Indeed, $H$ can be or not be curriculum and budgeted. If $\forall (k, \tau) \;\; \mathcal{L}_k^A \cap \mathcal{L}_{k-\tau}^A = \phi$, then the stream is called curriculum active label stream.
The generated label stream is a time series of label of the current step 
\begin{equation}
\mathcal{L}_k^G = \{  y_{k,j}   \;\; |  \;\;  y_{k,j} = \breve{\ell}(x_{k,j})   \;\;  \forall x_{k,j} \in  G_k \} 
\end{equation}
\end{definition}

\begin{definition}{Novelty detector}
\label{def_novelty_detection}
\\ Novelty detector is a binary classifier inside the agent $\mathcal{A}_k$ at the time step $k$ that decides if each input $x_{k,j} \in \mathcal{S}_k$ is known $x_{k,j} \in K_k$  or unknown $x_{k,j} \in U_k$.
\end{definition}

\begin{definition}{Novelty discovery}
\label{def_novelty_discovery}
\\ Novelty discovery is a subsystem of the agent $\mathcal{A}_k$ at the time step $k$ that receives data points that predicte novel by the novelty detector and group the similar points together. The novelty discovery may or may not access to attribute, rules, or external knowledge.
\end{definition}

\begin{definition}{Novelty manager}
\label{def_novelty_manager}
\\ Novelty manager is a subsystem of the agent $\mathcal{A}_k$ at the time step $k$ that receives data points that are predicted by a novel novelty detector and decides to move it to ignore set $I_k$ or buffer set $B_k$. Then it decides to send the buffer set to a novelty discovery subsystem and with what hyper parameters values. Next, It receives the output of novelty discovery subsystem and applies quality analysis. Finally, based on the computed quality, it will decide each cluster to move to discovered set $D_k$, return it back to buffer $B_k$, or put it in ignore set $I_k$.
\end{definition}

\begin{definition}{Open-set recognizer}
\label{def_OSR}
\\ The agent $\mathcal{A}_k$ is an open-set recognizer if it acts on open-world streams and maps each input $x_{k,j} \in \mathcal{W}_k$ to a vector of probabilities of $x_k$, belonging to one of the $m_0$ known classes, $k_1 \ldots k_{m_0}$ or the reject class $I$ in the stream. Because the agent does not learn new classes, $K_k = K_0 \;\; \forall k \in \mathbb{N}$.
\end{definition}

\begin{definition}{Open-set representation incremental learner}
\label{def_OS_RIL}
\\ The agent $\mathcal{A}_k: \mathcal{W}_k \mapsto \reals^{1+m_0} $ is an supervised open-set representation learner if it acts on open-world streams and  incremental label stream $\mathcal{L}_k^I$ and maps each input $x_{k,j} \in \mathcal{W}_k$ to a vector of probabilities of $x_{k,j}$ belonging to one of the $m_0$ known classes $k_1 \ldots k_{m_0}$ or the reject class $I$ in the stream. If the incremental label stream contains classes that are not in the pre-training set $K_0$, it assumes the label is equal to reject class $I$. Because the agent does not learn new classes, $K_k = K_0  \;\; \forall k \in \mathbb{N}$. If an agent does not use incremental label stream, it is an unsupervised open-set representation learner.
\end{definition}

\begin{definition}{Open-set class incremental learner}
\label{def_OS_CIL}
\\ The agent $\mathcal{A}_k: \mathcal{W}_k \mapsto \reals^{1+m_k} $ is an  open-set class incremental learner if it acts on open-world streams and incremental label stream and maps each input $x_{k,j} \in \mathcal{W}_k$ to a vector of probabilities of $x_{k,j}$ belonging to one of the reject class $I$, $m_k$ known classes $k_1 \ldots k_{m_k}$ in the stream. Then it learns new classes and may update existing classes based on incremental label stream $\mathcal{L}^I_k$ without learning new representation. Let us decouple the agent $\mathcal{A}_k$ in to representation extractor $R_k: \mathcal{W}_k \mapsto \reals^{f}$ and class predictor $C_k: \reals^{f} \mapsto \reals^{1+m_k}$ where $f \in \mathbb{N}$ is constant. The agent keeps $R_k$ frozen and updates $C_k$, i.e, $R_k(x) = R_0(x)$.
\end{definition}

\begin{definition}{Open-set class and representation incremental learner}
\label{def_OS_RCIL}
\\ The open-set class and representation incremental learner is similar to the open-set class incremental learner, but the difference is that the agent updates both $R_k$ and $C_k$ simultaneously.
\end{definition}

\begin{definition}{Unsupervised representation incremental learner}
\label{def_URIL}
\\ The agent $\mathcal{A}_k: \mathcal{W}_k \mapsto \reals^{m_0}$ is an unsupervised representation learner if it acts on open-world streams without label stream and maps each input $x_{k,j} \in \mathcal{W}_k$ to a vector of probabilities of $x_{k,j}$ belonging to one of the $m_0$ pre-training classes $k_1 \ldots K_{m_0}$.  Because the agent does not learn new classes, $K_k = K_0  \;\; \forall k \in \mathbb{N}$.
\end{definition}

\begin{definition}{Unsupervised class incremental learner}
\label{def_UCIL}
\\ The agent $\mathcal{A}_k: \mathcal{W}_k \mapsto \reals^{1+m_k} $ is an unsupervised class learner if it acts on open-world streams without label stream and maps each input $x_{k,j} \in \mathcal{W}_k$ to a vector of probabilities of $x_{k,j}$ belonging to one of the $m_k$ known classes $k_1 \ldots k_{m_k}$ or a new class learn in the stream. Then it learns the new class and may update existing classes without learning new representation. The agent keeps $R_k$ frozen and updates $C_k$, i.e, $R_k(x) = R_0(x)$.
\end{definition}

\begin{definition}{Unsupervised class and representation incremental learner}
\label{def_UCRIL}
\\ The unsupervised class and incremental learner is similar to the unsupervised class learner but the difference is that the agent updates both $R_k$ and $C_k$ simultaneously.
\end{definition}

\begin{definition}{Open-world representation learner}
\label{def_OWL_R}
\\ The open-world representation incremental learner $\mathcal{A}_k: \mathcal{W}_k \mapsto \reals^{1 + m_0}$ is similar to the open-set representation learner, but it uses active label stream $\mathcal{L}_k^A$ instead of incremental label stream $\mathcal{L}_k^I$. The active label stream may or may not be budgeted. 
\end{definition}

\begin{definition}{Open-world class learner}
\label{def_OWL_C}
\\ The open-world class learner $\mathcal{A}_k: \mathcal{W}_k \mapsto \reals^{1 + m_k}$  is similar to the open-set class incremental learner, but it uses active label stream $\mathcal{L}_k^A$ instead of incremental label stream $\mathcal{L}_k^I$. Using the active label stream causes the number of predicted classes ($m_k$) to not track the number of expected classes.
\end{definition}

\begin{definition}{Open-world class and representation learner}
\label{def_OWL_CR}
\\ The open-world class and incremental learner is similar to the open-world class learner but the difference is that the agent updates both $R_k$ and $C_k$ simultaneously.
\end{definition}

\begin{definition}{Unsupervised open-world representation learner}
\label{def_UOWL_R}
\\ The agent $\mathcal{A}_k: \mathcal{W}_k \mapsto \reals^{1+m_0} $ is an unsupervised open-world representation learner if it acts on open-world streams without label stream  and maps each input $x_{k,j} \in \mathcal{W}_k$ to a vector of probabilities of $x_{k,j}$ belonging to one of the $m_0$ known classes $k_1 \ldots k_{m_0}$ or the reject class $I$ in the stream. Because the agent does not learn new classes, $K_k = K_0  \;\; \forall n \in \mathbb{N}$. The main difference with unsupervised open-set representation incremental learner is that it has data manager. 
\end{definition}

\begin{definition}{Unsupervised open-world class learner}
\label{def_UOWL_C}
\\ The agent $\mathcal{A}_k: \mathcal{W}_k \mapsto \reals^{1+m_k} $ is an unsupervised open-world class learner if it acts on open-world streams and without label stream and maps each input $x_{k,j} \in \mathcal{W}_k$ to a vector of probabilities of $x_{k,j}$ belonging to one of the the reject class $I$, $m_k$ known classes $k_1 \ldots k_{m_k}$ in the stream. Then it learns new classes and may update existing classes based on novelty discover without learning new representation. The agent keeps $R_k$ frozen and updates $C_k$, i.e, $R_k(x) = R_0(x)$. The main difference with unsupervised open-set representation learner is that it has data manager. 
\end{definition}

\begin{definition}{Unsupervised open-world class and representation learner}
\label{def_UOWL_CR}
\\ The  unsupervised open-world class and representation learner  $\mathcal{A}_k: \mathcal{W}_k \mapsto \reals^{1+m_k} $ is similar to the unsupervised open-world class learner, but it updates both $R_k$ and $C_k$.  The main difference with unsupervised open-set class and representation learner is that it has data manager. 
\end{definition}

\begin{definition}{Zero-shot learner}
\label{def_zero_shot}
\\ Any above defined agent is a zero-shot learner if, in addition to representation extractor $R_k$, it uses both attributes and rules for classifying existing class, detecting novel inputs, discovering new classes, or learning new classes. Attributes are properties of classes, for example, color, texture, number of legs. Rules are texts, tables, or dictionaries that describe classes. The human language dictionary is the best example of rules. Although direct rules are more convenient, rules can be indirect finding items via an encyclopedia, news, or entire web.  
\end{definition}

\begin{definition}{Single-shot learner}
\label{def_few_shot}
\\ Any above defined agent is a single-shot learner if they learn new classes or updates representation with only a data point.
\end{definition}

\begin{definition}{Few-shot learner}
\label{def_single_shot}
\\ Any above defined agent is a few-shot learner if they learn new classes or updates representation with a few data points, for example up to ten points but usually five points.
\end{definition}

\def\xmark{\ding{55}}
\def\xmark{$\cdot$}
\begin{table*}[!t]
\caption{Key properties of open-world learning and related research areas are columns where different research areas with references are shown in rows. For compactness `IL' stands for incremental learning and `OWL' for open-world learning. \checkmark means a research area provides that specific property;  a dot(\xmark)  means it does not provide that property.  }
{\small
\begin{center}
\begin{tabular}{|c|c|c|c|c|c|c|c|c|c|} 
\hline 
Research Area \\ \& Related papers  & \rotatebox[origin=b]{90}{\parbox{45mm}{\centering  Attributes \& rules unnecessary}}  & \rotatebox[origin=b]{90}{\parbox{45mm}{\centering Classifying  known}} &  \rotatebox[origin=b]{90}{\parbox{45mm}{\centering Detecting  unknowns}}  & \rotatebox[origin=b]{90}{\parbox{45mm}{\centering Managing unknowns}}   &  \rotatebox[origin=b]{90}{\parbox{40mm}{\centering Clustering unknowns}}   & \rotatebox[origin=b]{90}{\parbox{45mm}{\centering Multi-round classes learning}}   &  \rotatebox[origin=b]{90}{ \parbox{45mm}{\centering Multi-round  rep.  learning }}  &  \rotatebox[origin=b]{90}{ \parbox{45mm}{\centering Multi-round learning without label}} &  \rotatebox[origin=b]{90}{ \parbox{45mm}{\centering Demonstrated scalability}}   \\
\hline
OOD detection \cite{hendrycks17baseline,liang2018enhancing, vyas2018out, Lee2020gradientsNN, techapanurak2020hyperparameter, liu2020energy, sehwag2019analyzing, hsu2020generalized, sastry2020detecting, techapanurak2020hyperparameter, mohseni2020self}, & \multirow{3}{*}{\checkmark} & \multirow{3}{*}{\xmark} & \multirow{3}{*}{\checkmark} & \multirow{3}{*}{\xmark} & \multirow{3}{*}{\xmark} & \multirow{3}{*}{\xmark} & \multirow{3}{*}{\xmark} & \multirow{3}{*}{\xmark} & \multirow{3}{*}{\checkmark}  \\
anomaly detection \cite{hendrycks2018deep, golan2018deep, bergman2019classification, ruff2019deep, kimura2020adversarial, nguyen2019anomaly, zenati2018adversarially, li2021deep, fan2020robust, bergmann2020uninformed}, & & & & & & & &  & \\
and  novelty  detection \cite{abati2019latent, perera2019deep, oza2020utilizing, tack2020csi, zhang2020multi, lee2018hierarchical, schultheiss2017finding, oza2020multiple, bhattacharjee2020multi}  & & & & & & & & & \\
\hline
Open-set recognition \cite{scheirer2012toward, dhamija2018reducing, yoshihashi2019classification, oza2019c2ae, miller2021class, sun2020conditional, liu2019large, geng2020recent, cevikalp2019polyhedral}& \checkmark  &  \checkmark &  \checkmark &    \xmark   &   \xmark &   \xmark &   \xmark  &   \xmark  & \checkmark  \\ 
\hline
Novelty discovery  \cite{cho2015unsupervised, hsu2018learning, han2019learning, vo2019unsupervised, vo2020toward, wei2019unsupervised, qing2021end}  &  \checkmark &  \checkmark & \xmark    &   \xmark   &   \checkmark &  \xmark  &    &   \xmark  & \checkmark  \\ 
\hline
Generalized Novelty discovery \cite{han2019autonovel , zhong2020openmix, lee2020visualizing} &  \checkmark &  \checkmark &  \checkmark &   \xmark    &    \checkmark & \xmark   &  \xmark &   \xmark   &  \checkmark \\ 
\hline
Zero-shot learning  \cite{romera2015embarrassingly, xian2017zero, xian2018zero, bansal2018zero, guo2018zero, li2020symmetry, rohrbach2011evaluating, socher2013zero, elhoseiny2013write} &  \xmark & \checkmark & \xmark   &   \xmark  &   \xmark  & \xmark  & \xmark &   \xmark   & \checkmark  \\ 
\hline
Generalized zero-shot learning  \cite{mancini2021open, fu2019vocabulary, gune2019generalized, liu2018generalized, atzmon2019adaptive} &  \multirow{2}{*}{\xmark} &  \multirow{2}{*}{\checkmark} &  \multirow{2}{*}{\checkmark} &  \multirow{2}{*}{\xmark}    &   \multirow{2}{*}{\checkmark}  & \multirow{2}{*}{\xmark}  &  \multirow{2}{*}{\xmark}  &  \multirow{2}{*}{\xmark}  & \multirow{2}{*}{\checkmark} \\
\cite{ mandal2019out, bhattacharjee2019autoencoder, geng2020guided, chen2020boundary, zhang2018triple, rahman2018unified, huynh2020shared, verma2018generalized, kodirov2017semantic, felix2018multi, xie2020region}  & & & & & & & &  & \\
\hline
Class IL  \cite{shmelkov2017incremental, kemker2018fearnet, belouadah2018deesil, belouadah2019il2m, belouadah2020scail, ayub2020cbcl, ayub2020storing, ayub2021eec, guerriero18openreview}  &  \checkmark & \checkmark & \xmark  &  \xmark   &   \xmark  & \checkmark  & \xmark  &   \xmark  &  \checkmark  \\ 
\hline
Zero-shot Incremental Learning (IL)  \cite{xue2017incremental}  &  \xmark & \checkmark & \xmark  &   \xmark   &   \xmark & \checkmark  &  \xmark  &  \xmark  &   \xmark \\ 
\hline
Representation IL  \cite{li2019incremental, din2020online, liu2020semi}  &  \checkmark & \checkmark & \xmark  &  \xmark   &   \xmark  &  \xmark  & \checkmark  &   \xmark  &  \xmark  \\ 
\hline
Class and representation IL  \cite{wu2019large, rebuffi2017icarl, lopez2017gradient, castro2018end, hou2019learning} &  \multirow{2}{*}{\checkmark} & \multirow{2}{*}{\checkmark} & \multirow{2}{*}{\xmark}  &   \multirow{2}{*}{\xmark}   &   \multirow{2}{*}{\xmark} &  \multirow{2}{*}{\checkmark} &  \multirow{2}{*}{\checkmark}  & \multirow{2}{*}{\xmark}   & \multirow{2}{*}{\checkmark}  \\
\cite{zhao2020maintaining, he2020incremental, liu2020generative, liu2020mnemonics, douillard2020podnet, douillard2020plop, yu2020semantic, zhang2020class, mi2020generalized, kurmi2021not}  &   &  &   &  &   &   &   &   & \\
\hline
Class OWL  \cite{bendale2015towards, rudd2017extreme, boult2019learning, xu2019open, dhamija2021self, cao2021open}  &  \checkmark & \checkmark  & \checkmark   &    \checkmark  &   \xmark & \checkmark  &  \xmark  &   \xmark  &  \checkmark  \\
\hline
Zero-shot OWL   &  \xmark & \checkmark  & \checkmark   &    \checkmark  &   \xmark & \checkmark  &  \xmark  &  \xmark  &  \xmark  \\
\hline
Representation OWL&  \checkmark & \checkmark  & \checkmark   &  \checkmark   &   \xmark  &  \xmark &  \checkmark   &   \xmark  &  \xmark  \\
\hline
Class and representation OWL \cite{guo2019multi, joseph2021open} &  \checkmark & \checkmark  & \checkmark   &   \checkmark   &   \xmark & \checkmark  &  \checkmark  &  \xmark     &  \checkmark  \\
\hline
Open-set class IL &  \checkmark & \checkmark  & \checkmark   &   \xmark  &   \xmark  & \checkmark   &  \xmark   &  \xmark    &  \xmark  \\
\hline
Open-set zero-shot  IL  and Generalized zero-shot IL  \cite{jia2017incremental, feng2020transfer} &  \xmark & \checkmark  & \checkmark   &   \xmark  &   \checkmark  & \checkmark   &  \xmark   &  \xmark    &  \xmark  \\
\hline
Open-set representation IL &  \checkmark & \checkmark  & \checkmark   &  \xmark  &   \xmark  & \xmark   &  \checkmark    &  \xmark   &  \xmark   \\
\hline
Open-set class and representation IL &  \checkmark & \checkmark  & \checkmark   &   \xmark  &   \xmark  & \checkmark   &  \checkmark    &  \xmark   &  \xmark   \\
\hline
Unsupervised class IL  \cite{pernici2017unsupervised, pernici2020self,  lv2018unsupervised, kalshetti2019unsupervised, marxer2016unsupervised}  &  \checkmark &  \checkmark & \checkmark   &  \xmark    &   \xmark  & \checkmark  & \xmark  &  \checkmark   &  \xmark   \\ 
\hline
Unsupervised representation IL  \cite{rao2019continual, ma2019unsupervised, aljundi2019task}  &  \checkmark &  \checkmark &  \xmark  &  \checkmark   &   \xmark  &  \xmark  & \checkmark  &  \checkmark   &  \xmark   \\ 
\hline
Unsupervised class and representation IL   \cite{stojanov2019incremental, allred2016unsupervised}  &  \checkmark &  \checkmark &  \xmark  &  \xmark    &   \xmark &  \checkmark &  \checkmark  & \checkmark   &  \xmark  \\
\hline
Unsupervised open-set class IL & \checkmark  &  \checkmark & \checkmark  &  \xmark   &   \checkmark  & \checkmark  &  \xmark  &  \checkmark    &  \xmark  \\\hline
Unsupervised open-set zero-shot class IL   & \xmark  &  \checkmark & \checkmark  &  \xmark   &   \checkmark  & \checkmark  &  \xmark  &  \checkmark   &  \xmark  \\
\hline
Unsupervised open-set representation IL  &  \checkmark & \checkmark  & \checkmark  &   \xmark   &   \xmark  &  \xmark & \checkmark   &  \checkmark   &  \xmark  \\
\hline
Unsupervised open-set class and representation IL & \checkmark & \checkmark  & \checkmark  &   \xmark   &   \checkmark  &  \checkmark & \checkmark   &  \checkmark   &  \xmark  \\
\hline
{\bf\rule{0pt}{10pt} \rule[-6pt]{0pt}{6pt} Class OWL without labels (this paper)}  & \checkmark  &  \checkmark & \checkmark  &  \checkmark   &   \checkmark  & \checkmark  &  \xmark  &  \checkmark    &  \checkmark \\\hline
Zero-shot OWL without labels   (future works)  & \xmark  &  \checkmark & \checkmark  &  \checkmark   &   \checkmark  & \checkmark  &  \xmark  &  \checkmark  &  \xmark   \\
\hline
Representation OWL without labels (future works) &  \checkmark & \checkmark  & \checkmark  &   \checkmark   &   \xmark  &  \xmark & \checkmark   &  \checkmark  &  \xmark   \\
\hline
Class and representation OWL without labels (future works) & \checkmark & \checkmark  & \checkmark  &   \checkmark   &   \checkmark  &  \checkmark & \checkmark   &  \checkmark   &  \xmark  \\
\hline
\end{tabular}
\end{center}}
\label{table_compare_problems}
\end{table*}

\section{Survey of Related Work}
\label{sec:related}
In this section, we document some of the work in closely related research problems of out-of-distribution detection, open-set recognition, generalized zero-shot learning, supervised incremental learning, generalized novelty discovery, supervised open-world learning, and unsupervised incremental learning, providing examples of the formalization from the previous. We further discriminate open-world learning without labels with each of these related problems and provide a comparison for these related works in Table \ref{table_compare_problems} with the goal of providing broad coverage of recent literature. To the best of our knowledge, there is not any prior work on open-world learning without labels; we discuss the most relevant of the works from the table and how they are related to and different from our work.

\subsection{Novelty Detection and Open-Set Recognition}
Out-of-distribution detection \cite{liu2020energy}, anomaly detection \cite{ruff2019deep}, and novelty detection \cite{ zhang2020multi} are binary classifiers and cannot distinguish between known classes of the training data set. On the other hand, approaches developed for open-set recognition \cite{ oza2019c2ae} are algorithms that classify knowns and put all unknowns in "a reject class." Currently, Helmholtz free energy \cite{liu2020energy} is the state-of-the-art among binary classifiers and \cite{miller2021class} is the state-of-the-art in open-set recognition. Long-tail open-set recognition has also been studied recently in \cite{liu2019large}. Although the paper \cite{liu2019large} used the term open-world in the title, it never shows how to learn new classes. Out-of-distribution detection, anomaly detection, novelty detection, and open-set recognition algorithms do not distinguish between unknown classes. They can be viewed as a component of an open-world agent, but by themselves cannot accomplish open-world learning, definition \ref{def_OWL_CR}.

\subsection{Generalized Zero-Shot learning}

Zero-shot learning \cite{xian2018zero} is a set of algorithms that recognizes new concepts by just describing their unique set of attributes, i.e., each class is described by combinations of attributes and rules. In practice, the majority of agents that work in the open-world do not have any attribute extractor. By adding an open-set recognition to zero-shot learning, we can create generalized zero-shot learning. Thus, generalized zero-shot learning \cite{chen2020boundary} is an extension of zero-shot learning that classifies both knowns and unknowns. Both require attributes as their input and use rules to define classes, including some unseen classes. Agents in open-world learning usually do not have attribute extractors. So, comparing open-world learning with generalized zero-shot learning is similar to comparing apples and oranges. We did not add any generalized zero-shot learning to the result section because open-world learning does not have access to attributes.

In a recent paper \cite{mancini2021open},  efforts have been made to relate generalized zero-shot learning to open-world learning. Still, they have explicitly tried to redefine open-world learning, stating that their new definition is not consistent with \cite{bendale2015towards} and also inconsistent with our definition.  Their model requires that all classes be defined as a combination of primitives with all primitives provided during training. Then they define the open-world as a problem where all the combinations of states and objects can form a valid compositional class.  Presuming all primitives are available in the training stage is unrealistic, so the definitions are inconsistent. Thus we do not compare with any generalized ZSL approach.

\subsection{Supervised Incremental Learning}

Supervised incremental learning has been studied for many years \cite{rebuffi2017icarl}. Some researchers proposed to decouple feature extraction from inference subsystem \cite{belouadah2018deesil} while others proposed to adapt both subsystems jointly \cite{zhao2020maintaining}. Also, some researchers only add classes incrementally when feature extraction remains constant \cite{lesort2020continual}. Some models, such as artificial neural networks, forget what they have learned while learning new classes \cite{rolnick2019experience}. This phenomenon is known as \textit{catastrophic forgetting}. Fortunately, there are many models that never experience forgetting such as K Nearest Neighbor \cite{losing2018incremental}, naive Bayes \cite{losing2018incremental}, random forest\cite{losing2018incremental}, neural decision forests \cite{kontschieder2015deep, roy2020tree}, and support vector machine \cite{benavides2019svm}. Because agents in incremental learning receive labels, they do not detect which samples belong to unknown classes. Although incremental learning is a component of open-world, open-world learning cannot be compared directly with incremental learning, see definition \ref{def_OWL_CR}.

\subsection{Generalized Novelty Discovery}

Novelty discovery is the problem of exploring novel classes in an image collection given labeled examples of known classes  \cite{vo2019unsupervised, han2019learning, han2019autonovel,  vo2020toward, qing2021end}.  Novelty discovery is significantly harder than semi-supervised learning because there are no labeled examples for the new classes. Unsupervised clustering \cite{sarfraz2019efficient, mcinnes2017hdbscan} can be viewed as a special case of novelty discovery where agents ignore prior knowledge, i.e., known classes labeled in the training data set.  Novelty discovery agents assume that data are from unknown classes and do not belong to known classes.  However, open-world learning agents receive both known and unknown inputs and cannot assume a particular input is known or unknown. Thus, novelty discovery is not equal to open-world learning. By using open-set recognition before novelty discovery, we can create generalized novelty discovery. Generalized novelty discovery agents can classify knowns and cluster unknowns. Currently, \cite{han2019autonovel} is the state-of-the-art generalized novelty discovery. The main difference between generalized novelty discovery and open-world learning is that open-world learning requires a multi-round incremental algorithm learning new classes incrementally from a stream, see definition \ref{def_OWL_CR}. In contrast, generalized novelty discovery is not multi-round incremental. It uses only one batch.  Therefore, generalized novelty discovery is not equal to open-world learning. One way to create open-world learning is running generalized novelty discovery on each batch of data then doing incremental learning. This idea is impractical in large-scale applications because each batch takes a long time. Thus, we did not include it in the comparison.

\subsection{Supervised Open-World Learning}

Obviously, the most related work is supervised open-world learning, which was first formalized by \cite{bendale2015towards} with subsequent work in  \cite{rudd2017extreme, boult2019learning}.  {\em In these prior works, supervised labels were provided to support the incremental learning in the open-world, and labels were provided for ALL unknowns, not just those detected, providing unrealistically good performance.} The Extreme Value Machine (EVM) \cite{rudd2017extreme} is the state-of-the-art for ImageNet scale classification problems. In this paper, we extend the EVM to work without labels and amend the broken evaluation protocols. More detail in section \ref{section_method}.

Helmholtz free energy for detecting unknowns was proposed in \cite{joseph2021open}. Then they used contrastive clustering in latent space to distinguish between unknown classes. To mitigate catastrophic forgetting, they stored a balanced set of exemplars and fine-tuned their model. Although their results seem to be promising, contrastive clustering is not scalable.

Multi-stage deep classifier cascades \cite{guo2019multi} consisted of a root and several leaf classifiers trained in an end-to-end fashion. This method can increment the leaf nodes to both recognize a newly added class and detect future unknown classes. It can learn and include new features without disturbing the existing features. Their experiments were limited to RF signals. As we know, cascaded classifiers are not practical in large-scale applications.

Thresholding to a K-Nearest Neighbor classifier was proposed by \cite{xu2019open} as part of their open-world learning agent. They only experiment in a small-scale textual domain. The results in \cite{rudd2017extreme} show that EVM has better performance than the nearest neighbor in the image classification domain.

\subsection{Unsupervised Incremental Learning}

Unsupervised incremental learning has been used in many applications such as the prediction of musical audio signals \cite{marxer2016unsupervised}, hand shape and pose estimation \cite{kalshetti2019unsupervised}, Financial Fraud Detection \cite{ma2019unsupervised}, road traffic congestion detection \cite{bandaragoda2019trajectory}, etc. Unsupervised incremental learning algorithms can detect and learn unknown classes, but they cannot manage them, i.e., they learn them immediately even if they are a singleton or noise. Thus, according to definition \ref{def_OWL_CR}, they are not an open-world learner. In the following, we briefly describe the closest works. In \cite{lv2018unsupervised}, authors designed a person re-identification algorithm based on pedestrian spatial-temporal patterns in the target domain that consisted of a feature extractor (CNN) and a matching model (Bayesian). Temporal patterns are not accessible in many image classifier agents.
In \cite{allred2016unsupervised}, researchers proposed spike-timing-dependent plasticity for spiking neural networks to learn digits 0 to 9 incrementally. Both approaches had very limited experiments and may not work in more complex data such as ImageNet or Places 365 standard.

In excellent research \cite{pernici2017unsupervised}, VGGface has used feature extraction in videos. Followed by a modified version of the nearest neighbor to learn new faces (classes) incrementally. Also, they designed a feature forgetting strategy to control memory size in the long run. The results in \cite{rudd2017extreme} show that EVM has better performance than the nearest neighbor. Thus, in this paper, we do not use the nearest neighbor classifier.
In \cite{lopez2019incremental}, they compared Support Vector Machines (SVM) with Extreme Learning Machines (ELM) in the task of incremental learning for face verification in video surveillance. They found that ELM is slightly better than SVM.

Continual Recognition Inspired by Babies (CRIB) \cite{stojanov2019incremental} is an unsupervised incremental object learning environment that can produce data that models visual imagery produced by object exploration in early infancy. They reported that single exposure yields catastrophic forgetting. The algorithm’s accuracy stays constant or decreases with a greater number of objects, so it is not scalable. Their algorithm exploits 3D models, so it could not be used as a basis for comparison in this paper.

Unsupervised incremental learning looks very similar to open-world learning. However, it is different because it does not manage unknowns. In open-world learning, an agent tracks unknowns until it has enough samples to form a meaningful cluster. Then it only learns the good cluster and keeps the rest of the unknowns for the future. However, an unsupervised incremental learning agent creates and learns a singleton class as soon as it sees a novel object. Up until now, there is not any published work that has demonstrated large-scale unsupervised incremental learning. Thus, we exclude them in the comparison.

\subsection{Other "Open-World"  Papers}

Some papers used the term "open-world" in their titles, but they never showed how to learn new classes, so are inconsistent with the original definitions of \cite{bendale2015towards} as well as the definitions of this paper.  For example according to our formalization, \cite{liu2019large} is addressing open-set recognition;  \cite{lakkaraju2017identifying} is addressing novelty detection;  \cite{sehwag2019analyzing} is addressing out-of-distribution detection.  In \cite{mancini2021open} the term open-world has been used but explicitly tried to redefine it to mean their form of generalized zero-shot learning, but they never showed how to learn new classes.
Even our own lab's work \cite{jafarzadeh2021automatic} investigated the reliability of classifiers in an open-world setting,  but in that paper, it did not show incremental learning steps was more open-set than open-world.

While we can appreciate the colloquial usage in each of these papers and have fallen into the title trap ourselves,  we believe it is in the interest of the field to have consistent and precise definitions. We encourage authors to take the formalizations in this paper as a start and expand them for their variations.  For example, one could formalize long-tailed as a modifier world, which could then be applied as a modifier to most of the problems formalized in section~\ref{sec_formal}.

\section{A general method for open-world learning}
\label{section_method}

This section develops a general framework for building open-world learning algorithms with six main elements: deep feature extraction, known classifications, novelty detection, novelty discovery, novelty management, and modified incremental learning.    Most of the existing incremental learning work cannot be directly applied but based on the survey above. We select components and explain how to modify them.  The ideal systems include novelty management and are summarized in Algorithms \ref{Alg_TOWL_other} and \ref{Alg_TOWL_FEVM}.  We will also consider the versions without management, which immediately use the detected novel items.

\begin{algorithm}[!t]
\caption{True Open-world Learner - LC}
\label{Alg_TOWL_other}
\SetAlgoNoLine
\DontPrintSemicolon
\SetKwInput{KwData}{Config}
\SetKwInput{KwResult}{Initialize}
\KwIn{Single image, Residual set, and incremental learning model (ILM)}
\KwData{$\psi$=minimum \# images to start learning, $\gamma$ =minimum \# cluster to start learning, $\rho$=minimum cluster size to create a new class, quality assurance SVM, $\tau$ confidence threshold, out-of-distribution detector OOD, and Linear classier (LC), }
\KwResult{$\Omega$ incremental features as Null}
\KwOut{predicted class, new ILM, incremental features }
\;
x $\leftarrow$ normalize image\;
f $\leftarrow$ CNN($x$) \tcp*{Deep feature}
$ \ell\leftarrow$ LC(f) \tcp*{Logit}
q $\leftarrow$ SoftMax($\ell$) \tcp*{class probabilities}
s $\leftarrow 1 -$ OOD $(x, f, \ell, q)$ \tcp*{Confidence}

\uIf{s $ > \tau$}{
p $\leftarrow (0, q, 0, 0 , \dots, 0) $
}
\Else{ 
u, d $\leftarrow$ ILM(f)\;
\tcp{ u is unknown probability}
\tcp{ d is discovered probability}
p $\leftarrow (d, 0, 0 , \dots, 0, d) $ \;
\tcp{ p is normalize probabilities}
\uIf{$u > \max(d)$}{
 \textbf{Insert} f in Residual 
}
}

y $\leftarrow$ argmax (p) \tcp*{Predicted label}

 T $\leftarrow \{ \}$\;
\uIf{size(Residual) $> \psi$}{ 
 L $\leftarrow$ Clustering(Residual)\;
 \tcp{L: cluster labels}
 \uIf{M $> \gamma$}{ 
 \ForEach{cluster K}{
 \uIf{size(K) $> \rho$}{
 F $\leftarrow \{a \in \textup{Residual} \; | \; K \}$\;
 c $\leftarrow \textup{mean}(F)$\;
 $v_e \leftarrow$ variance of F from c\;
 $v_c \leftarrow$ cosine variance of F from c\;
  \uIf{ SVM $ \; (v_e, v_c) > 0$}{ 
  Insert F to T
  }}}}}
\textbf{Update} IL with T\;
Delete covered clusters from Residual\;
$\Omega \; \leftarrow \; \Omega \; \cup $  T\;
\Return y, ILM, Residual, and $\Omega$
\end{algorithm}

\begin{algorithm}[!t]
\caption{True Open-world Learner - FEVM}
\label{Alg_TOWL_FEVM}
\SetAlgoNoLine
DontPrintSemicolon
\SetKwInput{KwData}{Config}
\SetKwInput{KwResult}{Initialize}
\KwIn{Single image, Residual set, and EVM model}
\KwData{$\psi$=minimum \# images to start learning, $\gamma$ =minimum \# cluster to start learning, $\rho$=minimum cluster size to create a new class, $\Omega$=pre-trained features, quality assurance SVM }
\KwOut{predicted class, new EVM, post-trained features }
\;
x $\leftarrow$ normalize image\;
f $\leftarrow$ CNN($x$) \tcp*{Deep feature}
q $\leftarrow$ EVM(f) \tcp*{class probabilities}
v $\leftarrow$ concatenate ( 1 - $\max$(q) , q ) \;
p $\leftarrow  v /\sum v $ \tcp*{Normalize probabilities}
y $\leftarrow$ argmax (p) \tcp*{Predicted label}
$u \leftarrow$ first element of p \tcp*{Unknown proba.}
\uIf{$u > \max(q)$}{
 Insert f in Residual 
}
 T $\leftarrow \{ \}$\;
\uIf{size(Residual) $> \psi$}{ 
 L $\leftarrow$ Clustering(Residual)\;
 \tcp{L: cluster labels}
 \uIf{M $> \gamma$}{ 
 \ForEach{cluster K}{
 \uIf{size(K) $> \rho$}{
 F $\leftarrow \{a \in \textup{Residual} \; | \; K \}$\;
 c $\leftarrow \textup{mean}(F)$\;
 $v_e \leftarrow$ variance of F from c\;
 $v_c \leftarrow$ cosine variance of F from c\;
  \uIf{ SVM $ \; (v_e, v_c) > 0$}{ 
  Insert F to T
  }}}}}
\textbf{Update} EVM with T\;
Delete covered clusters from Residual\;
$\Omega \; \leftarrow \; \Omega \; \cup $  T\;
\Return y, EVM, Residual, and $\Omega$
\end{algorithm}

\subsection{Feature Extraction and Classifiers for Knowns}
\label{subsection_feature_extraction}

The first subsystem of open-world learning is feature representation. Here, we used deep convolutional neural networks (CNN).  CNN can be easily replaced by other types of feature extractors such as transformers.
A CNN was trained on a training data set using supervised cross-entropy loss. While another CNN was trained on an auxiliary data set using self-supervised learning. Then  we extracted features from the latent layer, i.e., the last layer of each network before Logit. The proposed feature extractor is the concatenation of the frozen features of the two networks.

One of the proposed baselines uses a single classifier that classifies both $K_0$ knowns and $K_k - K_0$ new discovered classes. We trained a single EVM for this baseline. The other six baselines have two independent classifiers: a $K_0$ knowns classifier and a $K_k - K_0$ discovered classifier. For the $K_0$ known classifier, we trained a linear classifier with SoftMax activation function and cross-entropy loss in a supervised manner. The discovered classifiers are described in subsection \ref{subsection_incremental_learning}.

\subsection{Novelty Detection}
\label{subsection_novelty_detection}

To detect novelty, in the six baselines that have an independent linear classifier for $K_0$ knowns, we use thresholding over the SoftMax score. Similar to \cite{hendrycks17baseline}, the threshold is selected to have 95\% true positive rate on the knowns in the validation set.

For the baseline that has only one classifier, we use thresholding over the maximum EVM class probability, TPR = 0.95. EVM is derived from Extreme Value Theory (EVT), which is a branch of statistics that studies the behavior of extreme events and their associated probability distributions \cite{coles2001introduction, beirlant2006statistics, castillo2012extreme}. EVT is an extrapolation from observed samples to unobserved samples providing estimates of the probability of events that are more extreme than any of the already observed ones.  There are two principal parametric approaches to modeling the extremes of a probability distribution: (1) block maxima and (2) threshold exceedance. The Hill Estimator approach is also commonly used which is a non-parametric approach. The block maxima uses Generalized Extreme Value (GEV) distribution and threshold exceedance uses Generalized Pareto Distribution (GPD). According to Fisher-Tippet asymptotic theorem, for normalized maxima of blocks of random variables $M_n= \max(X_1, ..., X_n)$, there is a non-degenerate distribution, which is a GEV distribution, which for our case must follow  a Weibull distribution
\begin{equation}
\textup{W} (x; \mu , \sigma , \xi) = 
\begin{cases}
e^{- (1 + \xi(\frac{x - \mu}{\sigma}))^{\xi}} & , x< \mu - \frac{\sigma}{\xi} \\ 
1 & , x \geq \mu - \frac{\sigma}{\xi} 
\end{cases}
\end{equation}
The Extreme Value Machine (EVM) \cite{rudd2017extreme, henrydoss2017incremental} is a distance-based kernel-free non-linear classifier that uses Weibull families distribution to compute the radial probability of inclusion of a point with respect to nearest members of other classes. For a given point $x_i$, it fits the Weibull on the distribution margin distance, half the distance to the nearest negative samples
\begin{equation}
m_{i,j} = 0.5 * \|\hat{x}_i - x_j\|
\label{eq_margin}
\end{equation}
for the $\tailsize$ closest points $x_j$ from other classes. 
EVM provides a compact probabilistic representation of each class’s decision boundary, characterized in terms of its extreme vectors. Each extreme vector has a family of Weibull distribution. The probability of a point belonging to each class is defined as the maximum probability of the point belonging to each extreme vector of the class. EVM uses greedy approximation for Karp’s set cover problem for model size reduction by deleting redundant extreme vectors. In short, EVM for each input (point) computes the probability of inclusion to each class, i.e., the output is a vector of probabilities. The predicted class is computed by
\begin{align}
\label{EVM}
\hat{P} (C_l|x) &= \max_{j} \textup{W}_{l,j} (x; \mu_{l,j} , \sigma_{l,j} , \xi_{l,j}) 
\end{align}
where $\textup{W}_{l,j} (x)$ is Weibull probability of $x$ corresponding to $j$ extreme vector in class $l$.

\subsection{Novelty Discovery}
\label{subsection_novelty_dicscovery}

All prior published researchers pursued supervised open-world learning, getting labels for all novel images, and updating the model. The fundamental flaw is that the updated model does not depend on the past performance of the agent on detecting novel classes. However, to develop our True Open-World Learner (TOWL), we proposed to collect nominated novel images and group them to create new classes. While one might consider classic clustering, such as K-Means, we do not have any prior expectations on the number of new classes. Automatically discovering related groups of data in unsupervised data without parameters is an important and still unsolved problem. There are only a few published novelty discovery methods that are appropriate. In this paper, we use the Finch algorithm \cite{sarfraz2019efficient} for novelty discovery, which, while it is formally parameter-free, still requires the user to select among different returned partitions.  Finch can be replaced with other clustering or novelty discovery algorithms as part of novelty discoveries. For example, Finch can be replaced by AutoNovel \cite{han2019autonovel}, which is state-of-the-art but very slow.

\subsection{Novelty Management}
\label{subsection_novelty_management}

\begin{figure}[!t]
\centering
\includegraphics[width=\linewidth]{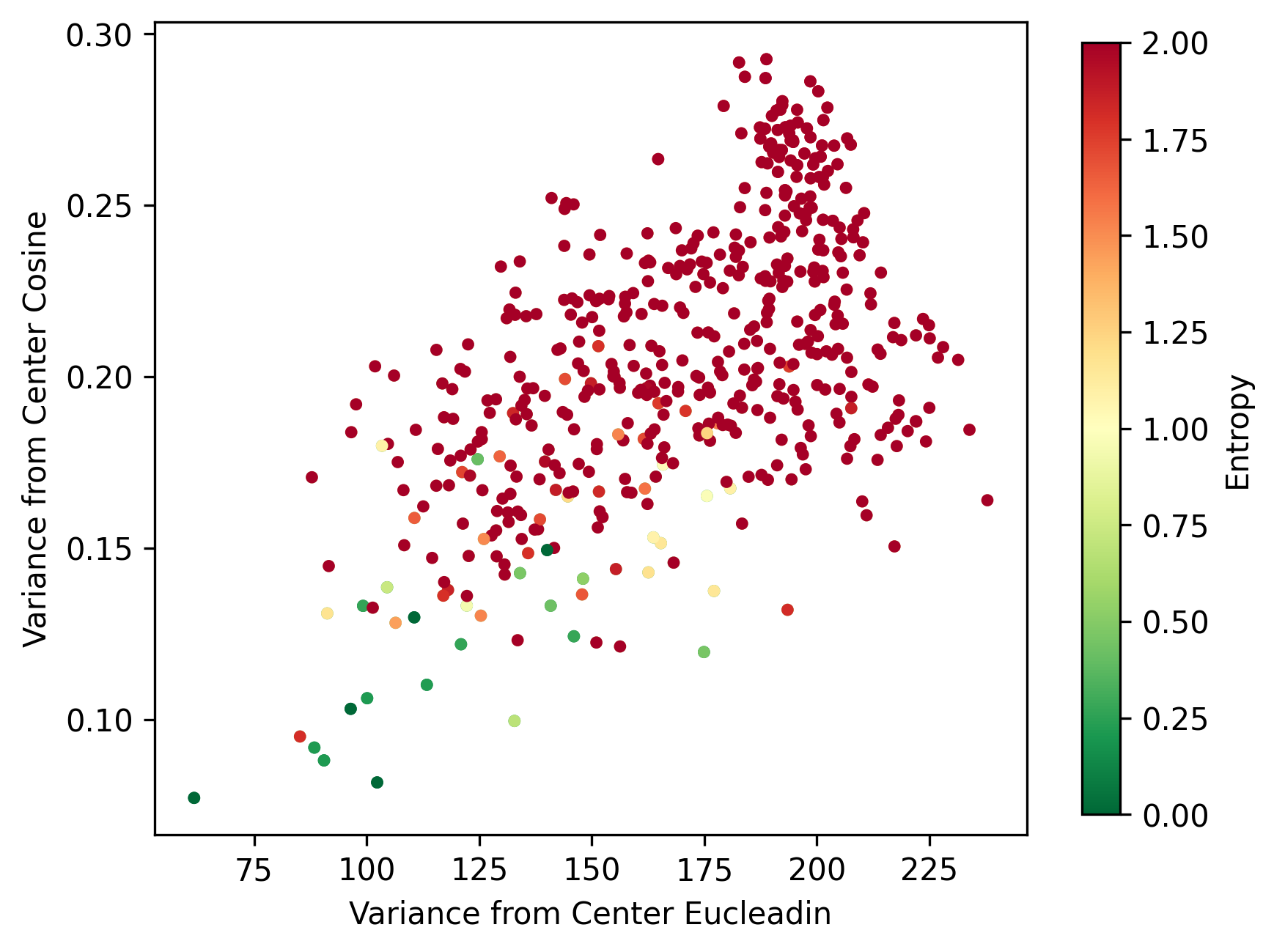}
\caption{Entropy of clusters of validation test plotted against Euclidean and cosine variance. As can be seen, the clusters with low ground-truth labeled entropy, i.e. more consistent with ground-truth labeling, are moderately well separated in the plot, suggesting a classifer can detect them. }
\label{fig_svm} 
\end{figure}

Novelty management can be divided into three phases: pre-discovery novelty management, intra-discovery management, and post-discovery novelty management. The pre-discovery novelty manager is a simple subsystem that is responsible to make sure the input of the novelty discovery process is sufficient. We proposed to collect nominated novel images into a residual set. When the set size becomes greater than a threshold, we pass the entire set to novelty discovery.

The intra-discovery manager is responsible to select proper hyper-parameters for the novelty discovery algorithm. Although Finch is parameter-free, it still does not provide a fully automatic operation since it produces multiple potential partitions among which we must choose. The order of partition is reversely sorted by the number of clusters, i.e., the first partition has the largest number of clusters and the last partition has the smallest number of clusters. The proposed manager selects the first partition if Finch outputs a/two partition(s); otherwise, it selects the second partition of the Finch.

The most challenging part of open-world learning is post-discovery novelty management. There might be several clusters that the number of data points in that cluster is smaller than the minimum requirement for the incremental learner, such as singletons and pairs for many algorithms. For this issue, the manager can use a threshold, for example, five for GMM. Moreover, there might be several clusters with the same class, i.e., over-clustering. Similarly, the manager can use a threshold, for example, 20. Thus, for solving these two issues, TOWL uses the maximum of these 2 numbers, i.e., 20 because it is the maximum of 5 and 20. Besides, there might be several clusters where each of them consists of several categories, i.e., under-clustering. This issue is more prevalent in novelty discovery that uses nearest neighbors. For example, if the input is a mixture of images of birds, flowers, fish, and airplanes, there is a high probability that birds and flowers create a cluster because there are some images that have both flowers and birds. We observe that there is a correlation between the entropy of clusters and Euclidean and cosine variance of cluster points (Fig. \ref{fig_svm}). Therefore, to reject clusters with multiple categories, we proposed that computing Euclidean and cosine variance of cluster points and using them as input of a Support Vector Machine (SVM) with a linear kernel to reject the clusters with a large value of entropy, which in our experiments is the minimum entropy cluster in validation run.

In the proposed novelty management, any images that do not form a cluster or belong to a rejected cluster remains in memory for the future until it passes to the incremental step, i.e., will be covered by a class of EVM.

\subsection{Incremental Learning}
\label{subsection_incremental_learning}

At each step, the novelty manager may pass one or a few clusters to be learned incrementally. In the following subsection, we describe how we modify well-known incremental learning algorithms to work in open-world settings. Similar to incremental learning, open-world learning also suffer from catastrophic forgetting. Catastrophic forgetting means the classifier forgets previous classes during learning new classes over time. Freezing the feature extractor alleviates catastrophic forgetting but cannot solve it completely. Thus, we study algorithms that are using frozen feature extractors. The study of open-world representation learning is beyond the scope of this paper.

\subsection{Modification of Past Works}
\label{subsection_modification}

\subsubsection{ONCM:Open-world NCM}
\label{subsection_modifed_ncm}

The nearest classifier means \cite{guerriero2018ncm} is an incremental learning algorithm that stores the mean of features of each class. A new class can be easily added by adding the mean of its features to the memory. In the inference, the Euclidean distances of the feature of the test sample from each means in the memory are computed. The predicted class is the class that has the minimum Euclidean distance. Although it is a very powerful algorithm in incremental learning problems, it cannot be used directly in open-world learning problems because the volume of known space is infinite, and the predicted class is one of the existing classes, regardless of the magnitude of the distance. Therefore, the modification of NCM is necessary for the open-world problem.

The first modification of NCM is to output unknowns if the number of means in memory is less than three. This, modification is necessary to avoid the singularity. The second modification is the computation of probability in the inference time 
\begin{equation}
\label{EQ_ncm_1}
q_i(x) = \frac{e^{-\| x - \mu_i \|_2}}{\sum\limits_j e^{-\| x - \mu_j \|_2}}
\end{equation}
We compute the SoftMax of negative distance. Then we define confidence as the maximum value of SoftMa, so uncertainty is one minus the maximum SoftMax value. The augmented possibility is the concatenation of uncertainty and SoftMax. The output of our ONCM is normalized augmented possibility
\begin{equation}
\label{EQ_ncm_2}
p(x) = \frac{[ \; 1 \; - \;  \max (q) \; , \; q^T \; ]^T}{1 \; -  \;  \max (q) \;  + \sum\limits_j q_j}
\end{equation}
where $\max(.)$ is an operator that act on a vector and output the maximum magnitude of a element among all elements of the vector.

\subsubsection{ONNO: Open-world NNO}
\label{subsection_modifed_nno}

Nearest Non Outlier (NNO) \cite{bendale2015towards} is a combination of the metric learning algorithm and out-of-distribution detection. NNO uses Nearest Class Mean Metric Learning (NCMML) \cite{mensink2013distance}, which is a metric learning for a classification task when inputs come from a frozen high dimensional feature extractor. NCMML uses a learnable matrix, called metric, $W_k \in \mathbb{R}^{f \times r}$ where $r$ is a significantly smaller than  feature dimension $r << f$.  The probability can be computed by 
\begin{equation}
\label{EQ_nno_1}
q^{NCMML}_i(x) = \frac{ e^{- \frac{1}{2} \; ( x - \mu_i)^T \; W_k^T \; W_k \; ( x - \mu_i) } }{ {\sum\limits_j e^{- \frac{1}{2} \; ( x - \mu_j)^T \; W_k^T \; W_k \; ( x - \mu_j) } }  }
\end{equation}
where $\mu_i$ is mean of all feature  points in the $i$'th class. At each time steps $k$, the metric $W_k$ is trained by minimizing the cross-entropy loss of prediction (\ref{EQ_nno_1}).  Then NNO computes 
\begin{align}
\label{EQ_nno_2}
\begin{multlined}
q_i(x) =  q^{NCMML}_i(x) \;\; \textup{Heaviside} \; (1 -  \\ \frac{1}{\tau} \; ( x - \mu_i)^T \; W_k^T \; W_k \; ( x - \mu_i))
\end{multlined}
\end{align}
where $\tau \in \mathbb{R}^+$ is constant, is called scale factor.  The value of $\tau$ is optimized in validation run, which is equal to 2 when $r = 4$ in our experiments. Finally, it estimates the probability of classes with
\begin{equation}
\label{EQ_nno_3}
p(x) = 
\begin{cases}
[1, 0 , 0 , \dots , 0 , 0]^T   & , \; \max(q) > 0 \\
[0, \; q^T  \; ]^T & , \;  \max(q) = 0
\end{cases}
\end{equation}
NCMML and NNO are powerful metric learning algorithms, but they have a singularity issue in open-world problems when the number of classes is less than three. Similar to the NCM, our modification to produce ONNO is to output unknowns. If the number of means in memory is less than three, the output will be an unknown (undiscovered) class. 

\subsubsection{OGMM:Open-world GMM}
\label{subsection_modifed_gmm}
Gaussian mixture model \cite{arandjelovic2005gmm} is an incremental learning algorithm that stores the mean and the covariance of each class. Similar to NCM, it is powerful in incremental learning problems but it has a singularity issue in open-world problems when the number of classes is less than three. In addition to NCM issues, GMM suffers from singularities of covariances. Thus, the modification of GMM is necessary for the open-world problem.

Similar to the NCM, the first modification is to output unknowns, and if the number of means in memory is less than three, the output will be an unknown (undiscovered) class. The second modification is computing the inverse of the covariance and prediction probability. If the determinant of a covariance $\Sigma$ is greater than a negligible number, $\epsilon = 0.0001$, we use the regular matrix inversion method to compute the inverse of the covariance. Otherwise, first, we compute the Moore–Penrose inverse of the covariance. Then we find the maximum value of the last steps. Finally, the inverse of covariance is equal to 1000 times the maximum value for zero diagonal and equal to the Moore–Penrose inverse for other elements
\begin{equation}
\label{EQ_gmm_1}
S^{-1} = 
\begin{cases}
\Sigma^{-1}      & , |\Sigma| \geq \epsilon \\
\Sigma^{\dagger}  + 1000 \max(\Sigma^{\dagger}) \;  D(\Sigma^{\dagger} \stackrel{?}{=} 0)  & , |\Sigma| < \epsilon
\end{cases}
\end{equation}
where  $\max(.)$ is the maximum operator on all elements of a matrix, $\dagger$ is Moore–Penrose operator.

There is not any algebraic closed-form solution to compute the cumulative density probability of  Gaussian with positive semi-definite covariance. Therefore, for OGMM we propose to use the following equation along with (\ref{EQ_ncm_2})

\begin{equation}
\label{EQ_gmm_2}
q_i(x) = \frac{e^{-\frac{(x - \mu_i)^T S^{-1} (x - \mu_i)}{2 s}}}{\sum\limits_j e^{-\frac{(x - \mu_j)^T S^{-1} (x - \mu_j)}{2 s}}}
\end{equation}
where $s \in \mathbb{R}^{+}$ is scale factor. The value of $s$ depends on the feature extractor and is tuned on the validation set. In this paper, (EfficientNet-B3 , ImageNet), $s$ is equal to ten.

\subsubsection{OCBCL: Open-world CBCL}
\label{subsection_modifed_cbcl}

Centroid-based concept learning \cite{ayub2020cbcl} is the state-of-the-art of distance-based incremental learning. For each class, it starts from a single point (feature), where the average is equal to the prototype itself. For each remaining feature in the class, it computes the distance of the feature to all current centers in the class, if the minimum distance is greater than a pre-defined threshold, the new point is added as a new center; otherwise, the closest center is updated. At inference time, it computes the distance of the test feature to all centers of all classes. Then the top k closest is selected. The score of each class is zero if it is not in the top k closest points, otherwise, it is the sum of the inverse of distance weighted by the number of samples belonging to the class. The k in top k is tuned on validation run, which in our experiment is equal to five.

CBCL has both issues discussed above for NCM. Therefore, we modified CBCL to solve the open-world problem. Similar to the NCM, the first modification is to output unknowns if the number of means in memory is less than three. The second modification is prediction probability. The score of OCBCL is modified to 
\begin{equation}
\label{EQ_cbcl_1}
r_i(x) = 
\begin{cases}
- \infty     & , \forall j \quad i_j \notin top \; K \\
\sum\limits_{i_j \; \in \; top \; K} \frac{1}{n_{i} \| x - \mu_{i_j} \|_2} & , \exists j \quad i_j \in top \; K 
\end{cases}.
\end{equation}
Then the normalized score can be calculated by
\begin{equation}
\label{EQ_cbcl_2}
q_i(x) = \frac{e^{r_i}}{\sum\limits_j e^{r_j}} .
\end{equation}
Finally, the prediction probability can be found using (\ref{EQ_ncm_2}).

\subsubsection{OSCAIL: Open-world SCAIL}
\label{subsection_modifed_scal}

In contrast with the above algorithms that use distance for incremental learning, scaling incremental learning \cite{belouadah2020scail} uses a linear classifier. It has a playing buffer that stores a limited number of features from past seen classes. In each step, SCAIL learns a new linear classifier and stores the initial mean absolute bias and mean sorted absolute weight of the newly added classes. For weight 
\begin{equation}
\label{EQ_scail_1}
\mu^{new} = \frac{1}{N^{new}} \; \sum\limits_{i} \underset{j}{\operatorname{sorted}} ( \{|w_{i_j}| \} ) )
\end{equation}
where $w_{i_j}$ is j'th weight of i'th classifier, and $N^{new}$ is number of new classes. For bias, j is constant 1, i.e., $\mu^{new} = \frac{1}{N^{new}} \sum_i |b_i|$. Then it scales the old classes' biases and weights by ratio of mean sorted absolute value new classes over initial value stored in memory 
\begin{equation}
\label{EQ_scail_2}
w_{i_j}^{scaled} = \frac{\mu_{r(j)}^{new}}{\mu_{r(j)}^{i}} w_{i_j}^{current}
\end{equation}
where $r(j)$ is the rank of j'th element of the feature vector.
Finally, it updates the play buffer. SCAIL has both issues of NCM.  Therefore, we modified SCAIL to solve the open-world problem. The first modification is to output unknowns if the number of means in memory is less than three. The second modification is prediction probability. For OSCAIL, we use (\ref{EQ_ncm_2}) where $q$ is the output of the original SCAIL.

\subsubsection{MEVM: Modified EVM}
\label{subsection_modifed_evm}

As mentioned in the earlier subsections we modified EVM. The Weibull family distribution often converges to zero rather quickly, and EVM generates a very sharp boundary. Thus, we declare an image as novel (and hence nominate it to create a new class) if the probability of the class of unknowns of MEVM (the first class, i.e., the class with label zero) is more than other classes. 
The original EVM formulation with its margin theorem concept, using Eq.~\ref{eq_margin}, is somewhat problematic for true open-worlds. The intuition behind the margin is that EVM is claiming half the space to the nearest other known class. That is fine for well-separated known classes, but it can easily be taking over too much open space for open-world learning as the assumption implies there are no classes in between the class being fitted and the nearest known classes. Because the original EVM experiments were tested using subsets of ImageNet using pre-trained features that already separated all classes, this oversight may not have been apparent. Also, we find that margin with a fixed tail size is poorly defined in highly imbalanced settings where a new class may have only a few samples. In such settings, we may need greater generalization from the few samples. Again this was not a problem in experiments in \cite{rudd2017extreme} which used nearly balanced sampling; real open-world learning cannot presume well-separated classes or balanced data. To address these issues, our enhanced MEVM includes the idea of a distance multiplier $d_m$, which replaces the multiplier of $0.5$ in Eq.~\ref{eq_margin} with a free parameter. If $d_m < 0.5$, then the model is smaller (more specialized), leaving some room between it and the nearest other known class. If we choose a higher value for distance multiplier $d_m > 0.5$ during incremental class addition, we can expand the class generalizing.  We tested on a range of values of the distance multiplier, optimizing open-set classification accuracy using held-out validation data. Among them, 0.45 demonstrates the best separation between known validation and unknown validation sets of ImageNet, slightly less generalization than the original EVM paper. With $d_m=0.45$, we leave some room for classes between two known classes.

We create two baselines from MEVM: a baseline with a linear classifier for known classes and EVM for discovered classes (LCMEVM), and another baseline with a single EVM for both knowns and discovered classes (FMEVM). The first modification in EVM is to add (\ref{EQ_ncm_2}) to EVM, where, $q$ is the class probability of the original EVM. The second modification is increasing the distance multiplier for discovered classes because, in most practical applications, the number of data in each class in the pre-training phase is significantly larger than the number of data in each cluster in the incremental phase, in order of 10 to 100. Thus, if we used the same parameters in the incremental phase as in the pre-training phase, the new clusters cannot generalize well. To solve this issue, we proposed to use a higher distance multiplier for new clusters than the pre-training classes. For the baseline with the linear classifier (LCMEVM), the pre-trained features set is initiated as a null set. In the other baseline (FMEVM), we proposed to load pre-trained features or post-trained features from the past and use it as negative examples for training the current class. In both baselines, if the novelty manager passes two or more clusters at a single step, in addition to the pre/post-trained feature, we use other clusters' features as negative examples for learning the current class.

A naive but simplest and fastest solution is to create a Weibull distribution with pre-defined parameters for either center or each point of each cluster independently and add it to EVM. Depending on the pre-defined parameter, it may overgeneralize or not generalize enough because the boundary of each class depends on categories and is not constant. Also, it causes the most catastrophic forgetting. Therefore, we did not use this simple method.

 One way of reducing catastrophic forgetting is combining a frozen feature extractor and using a MEVM distance multiplier of less than 0.5 in both pre-training and incremental phases. However, a distance multiplier of less than 0.5 results in not generalizing well. Another way to avoid catastrophic forgetting is to throw away existing MEVM and train all classes of MEVM in each step. Although training from scratch is the best solution on paper,  it takes a very long time and thus is not practical for many applications. Therefore, in this paper, in considering the trade-off between catastrophic forgetting, speed, and, generalization, our MEVM accepts some amount of forgetting but strives to keep it from being catastrophic.

\section{Evaluation Metrics}
Because open-world learning mixes recognition of known and unknown classes, directly applying traditional metrics designed for either supervised or unsupervised learning does not necessarily work well. 
Accuracy and balanced accuracy are the most popular metrics in supervised learning research. Unfortunately, accuracy cannot be defined when we do not have labels and cannot be applied to the unknowns. Even if we have ground-truth labels for the data that goes into the unknowns used in testing since no label is provided, the unsupervised learning may split classes or merge them. Hence, we need to include at least some unsupervised metrics, a.k.a clustering metrics.

B3 and Normalized Mutual Information (NMI) are the two most widely used metrics in clustering research \cite{amigo2009comparison}. B3 and NMI are good metrics when the number of samples is large enough to represent each class's probability distribution. In our experiments, we found that B3 and NMI on batches of data were not well suited to open-world learning, where we may have a large number of classes but only a small number of samples. None of them captures misclassifications of the unknowns into an otherwise empty "known" class or the splitting of a known class into a mix of known plus unknown classes, e.g., breaking novel views into new classes.   Therefore, we are proposing a new metric to overcome the issue of accuracy, B3, and NMI in open-world learning without labels. We call this the Open-World Metric (OWM). Before we define OWM we briefly summarize the B3 metric. 

\subsection{B3 Metric}
\label{subsection_metric_b3}
B3 is a fuzzy probabilistic metric that measures the precision and recall between clustering labels and true labels. Let us denote the features matrix with $X$ such that each row is a feature vector that corresponds to a point (sample). Then we can show the membership function of the true label with $\mu_Y(X)$ where the element in row $i$ and column $j$ is membership of point $i$ belonging to true class label $j$. Similarly, the clustering label's membership function can be shown by $\mu_K(X)$. Let us represent element-wise multiplication by $\odot$, element-wise multiplication division by $\oslash$, and a vector with all elements equal to one by $\mathbb{1}$. We can compute B3 metrics by:

{\small
\begin{align}
A_{L \times C} = \mu_Y^\top \; \mu_K &\qquad M_{L \times C} = A \odot A \\
T_{C \times 1} = \sum\limits_L A &\qquad S_{L \times 1} = \sum\limits_C A \\
P_{C \times 1} = (\sum\limits_L M) &  \oslash (T \odot T) \\
R_{L \times 1} =  (\sum\limits_C M) & \oslash (S \odot S) \\
\textup{Precision} = \frac{T^\top \, P}{T^\top \, \mathbb{1}} &\qquad \textup{Recall} = \frac{S^\top \, R}{S^\top \, \mathbb{1}}\\
F = \frac{2 \,\, \textup{Precision} \, . \, \textup{Recall}}{\textup{Precision} + \textup{Recall}} \label{eq_b3}
\end{align}
}

\subsection{Metric for Open-World Learning}
\label{subsection_metric_owl}

\begin{definition}{Open-world metric}
\\ Let $N$ be the number of items to be evaluated in data $X$. Let Acc be accuracy for known data and \textup{B3} be the B3 metric (Eq. \ref{eq_b3}) for unknown data. Let us use subscripts ground-truth and predicted categories of known and unknowns such that known predicted as known is $_{KK}$, known data which was (incorrectly) predicted as unknown by the classifier with $_{KU}$, unknown data that (incorrectly) predicted as known as $_{UK}$, and unknown data that predicted unknown by the classifier with $_{UU}$. For correct known predictions, we can use accuracy and for correct unknown predictions, we can use \textup{B3}, and we use incorrect predictions only in normalizing, then  the OWM score is computed by
\begin{equation}
\label{EQ_metric}
\textup{OWM} = \frac{ N_{KK} \;\; \textup{Acc}(X_{KK}) \;\; + \;\; N_{UU} \;\; \textup{B3}(X_{UU})}{N_{KK} \; + N_{KU} \; + N_{UK} \; + N_{UU}}
\end{equation}
\end{definition}
While we prefer B3, this measure can be generalized to combine other supervised or unsupervised metrics, e.g., $\textrm{OWM}_{\textrm{F1, NMI}}$ would use the above definition with macro-F1 instead of accuracy and NMI instead of B3.  We modified the library from paper \cite{baldwin1998description} to compute B3 scores.

\begin{table*}[!t]
\caption{Mean and standard deviation  on 5 tests, open-world scores of last 1000 images (10 batches). Feature is concatenation of frozen feature extractors were trained on ImageNet 2012  (supervised) and Places 365-standard (using MoCo v2).  Bold shows a "best" result that is statistically significantly better than others; italics shows a best results but  not statistically significantly better. }
{\small
\begin{center}
\begin{tabular}{|c|c|c|c|c|c|c|c|c|} 
\hline 
\# Unknown classes & \multicolumn{2}{c|}{5} & \multicolumn{2}{c|}{10} & \multicolumn{2}{c|}{25} & \multicolumn{2}{c|}{50} \\
\hline 
Method & $\mu$ & $\sigma$  &  $\mu$ & $\sigma$  &  $\mu$ & $\sigma$  &  $\mu$ & $\sigma$ \\
\hline \hline
TOWL-ONCM (ours) &  0.5629 & 0.0190 & 0.4960 & 0.0295 & 0.4505 & 0.0184 & 0.4374 & 0.0096 \\ 
\hline
TOWL-ONNO (ours) &  0.5248 & 0.0065 & 0.4782 & 0.0204 & 0.4430 & 0.0090 & 0.4335 & 0.0077 \\ 
\hline
TOWL-OGMM (ours) &  0.4963 & 0.0075 & 0.4454 & 0.0137 & 0.4237 & 0.0064 & 0.4207 & 0.0063 \\ 
\hline
TOWL-OCBCL (ours) &  0.5609 & 0.0247 & 0.4955 & 0.0401 & 0.4466 & 0.0128 & 0.4324 & 0.0077 \\ 
\hline
TOWL-OSCAIL (ours) &  0.5444 & 0.0304  &   0.4728 & 0.0169  &  0.4376 & 0.0074 & 0.4286 & 0.0069 \\ 
\hline
 TOWL-LCMEVM (ours) &   0.5541 & 0.0143 & 0.5012 & 0.0290 & 0.4606 & 0.0074 & \textit{0.4419} & 0.0125 \\ 
\hline
TOWL-FMEVM (ours) &  \textbf{0.5936} & 0.0174 & \textbf{0.5568} & 0.0343 & \textit{0.4670} & 0.0087 & 0.4040 & 0.0084 \\ 
\hline
\end{tabular}
\end{center}}
\label{table_compare_baselines_1}
\end{table*}

\begin{table*}[!t]
\caption{Mean and standard deviation  on 5 tests, open-world scores of last 1000 images (10 batches). Feature is concatenation of frozen feature extractors were trained on ImageNet 2012  (supervised) and Places 365-standard (using MoCo v2) }
{\small
\begin{center}
\begin{tabular}{|c|c|c|c|c|c|c|c|c|} 
\hline 
\# Unknown classes & \multicolumn{2}{c|}{5} & \multicolumn{2}{c|}{10} & \multicolumn{2}{c|}{25} & \multicolumn{2}{c|}{50} \\
\hline 
Method & $\mu$ & $\sigma$  &  $\mu$ & $\sigma$  &  $\mu$ & $\sigma$  &  $\mu$ & $\sigma$ \\
\hline \hline
SM OOD  + LC + ONCM + Finch FP  &  0.4922 & 0.0086 & 0.4477 & 0.0127 & 0.4215 & 0.0038 & 0.4178 & 0.0061  \\
\hline
Energy OOD  + LC + ONCM + Finch FP  &  0.4586 & 0.0134 & 0.4272 & 0.0103 & 0.4103 & 0.0059 & 0.4127 & 0.0050  \\
\hline
EVM OOD  + LC + ONCM + Finch FP  &  0.4880 & 0.0110 & 0.4389 & 0.0091 & 0.4139 & 0.0068 & 0.4108 & 0.0039  \\
\hline
SM OOD  + LC + ONNO + Finch FP  &  0.5097 & 0.0086 & 0.4673 & 0.0192 & 0.4593 & 0.0062 & 0.4560 & 0.0110  \\
\hline
Energy OOD  + LC + ONNO + Finch FP  &  0.4618 & 0.0131 & 0.4378 & 0.0118 & 0.4253 & 0.0087 & 0.4248 & 0.0054 \\
\hline
EVM OOD  + LC + ONNO + Finch FP  & 0.4992 & 0.0084 & 0.4609 & 0.0134 & 0.4392 & 0.0088 & 0.4407 & 0.0062  \\
\hline
SM OOD  + LC + OGMM + Finch FP  &  0.4891 & 0.0075 & 0.4418 & 0.0124 & 0.4196 & 0.0033 & 0.4174 & 0.0061  \\
\hline
Energy OOD  + LC + OGMM + Finch FP  &  0.4577 & 0.0129 & 0.4268 & 0.0100 & 0.4093 & 0.0053 & 0.4127 & 0.0050  \\
\hline
EVM OOD  + LC + OGMM + Finch FP  &  0.4861 & 0.0097 & 0.4356 & 0.0107 & 0.4110 & 0.0050 & 0.4101 & 0.0042  \\
\hline
SM OOD  + LC + OCBCL  + Finch FP  &  0.5518 & 0.0215 & 0.4801 & 0.0386 & 0.4393 & 0.0056 & 0.4286 & 0.0068  \\
\hline
Energy OOD  + LC + OCBCL  + Finch FP  &  0.4664 & 0.0159 & 0.4396 & 0.0145 & 0.4176 & 0.0046 & 0.4189 & 0.0063  \\
\hline
EVM OOD  + LC + OCBCL  + Finch FP  &  0.5254 & 0.0128 & 0.4769 & 0.0246 & 0.4208 & 0.0072 & 0.4212 & 0.0061  \\
\hline
SM OOD  + LC + OSCAIL  + Finch FP  &  0.5339 & 0.0310  &  0.4805 & 0.0149  &  0.4597 & 0.0114  &  0.4609 & 0.0061  \\
\hline
Energy OOD  + LC + OSCAIL  + Finch FP  &  0.4619 & 0.0134  &  0.4431 & 0.0100  &  0.4254 & 0.0113  &  0.4271 & 0.0092  \\
\hline
EVM OOD   + LC + OSCAIL  + Finch FP  &  0.5261 & 0.0205  &  0.4811 & 0.0205  &  0.4425 & 0.0099  &  0.4456 & 0.0069  \\
\hline
SM OOD  + LC + MEVM  + Finch FP  &  0.4938 & 0.0235 & 0.4666 & 0.0202 & 0.4712 & 0.0059 & 0.4860 & 0.0113  \\
\hline
Energy OOD  + LC + MEVM  + Finch FP  &  0.4493 & 0.0141 & 0.4408 & 0.0101 & 0.4384 & 0.0095 & 0.4421 & 0.0055  \\
\hline
EVM OOD   + LC + MEVM  + Finch FP  &  0.4781 & 0.0216 & 0.4599 & 0.0188 & 0.4554 & 0.0077 & 0.4672 & 0.0007  \\
\hline
SM OOD  + MEVM    + MEVM  + Finch FP  &  0.4556 & 0.0269 & 0.4260 & 0.0252 & 0.4319 & 0.0074 & 0.4478 & 0.0119  \\
\hline
Energy OOD  + MEVM    + MEVM  + Finch FP  &  0.4129 & 0.0156 & 0.4032 & 0.0144 & 0.4008 & 0.0089 & 0.4045 & 0.0068  \\
\hline
EVM OOD  + MEVM    + MEVM  + Finch FP  &  0.4427 & 0.0239 & 0.4234 & 0.0245 & 0.4195 & 0.0037 & 0.4326 & 0.0034  \\
\hline
Full MEVM  + Finch FP  &  0.4614 & 0.0354 & 0.4376 & 0.0231  &  0.4443 & 0.0072 & 0.4595 & 0.0086  \\
\hline
\end{tabular}
\end{center}}
\label{table_compare_baselines_2}
\end{table*}

\begin{table*}[!t]
\caption{Median on 5 tests, open-world scores of last 1000 images when the algorithm is TOWL-FEVM. S: Supervised features, I: MoCo V2 on ImageNet 2012 features, P: MoCo V2 on Places 365-standard features, +: concatenation, wL: with label, i.e., supervised open-world learning, SVM 1 (5): a SVM traind on validation with 1 (5) cluster(s) as positive and rest are negative.}
{\small
\begin{center}
\begin{tabular}{|c|c|c|c|c|c|c|c|c|} 
\hline 
\# Unknown classes & Feature Extractor & S & I & P & IP & SI & SP & SIP  \\ 
\hline \hline
\multirow{5}{*}{5} & No adaption (control case) &  0.4593 & 0.1822 & 0.1396 & 0.1775 & 0.4779 & 0.4766 & 0.4760   \\ 
\cline{2-9}
 & With label (upper bound) & 0.6958 & 0.4161 & 0.3335 & 0.4063 &  0.7109 & 0.7219 & 0.7145   \\ 
\cline{2-9}
 & OWL without SVM & 0.5364 & 0.2340 & 0.1670 & 0.2272 & 0.5592 & 0.5534 & 0.5510  \\ 
\cline{2-9}
 & OWL with SVM 1 & 0.5631 & 0.2414 & 0.1692 & 0.2344 & 0.5854 & 0.5969 & 0.5620  \\ 
\cline{2-9}
 & OWL with SVM 5 & 0.5542 & 0.2384 & 0.1779 & 0.2407 & 0.5912 & 0.5734 & 0.5685  \\ 
\hline \hline \hline
\multirow{5}{*}{10} & No adaption (control case) &  0.3759 & 0.1517 & 0.1036 & 0.1463 & 0.3906 & 0.3872 & 0.3882  \\ 
\cline{2-9}
 & With label (upper bound) & 0.6279 & 0.3484 & 0.2745 & 0.3328 & 0.6457 & 0.6431 & 0.6411  \\ 
\cline{2-9}
 & OWL without SVM & 0.4734 & 0.2071 & 0.1531 & 0.2080 & 0.5299 & 0.5058 & 0.5269  \\ 
\cline{2-9}
 & OWL with SVM 1 & 0.4858 & 0.2244 & 0.1543 & 0.2208 & 0.5013 & 0.5581 & 0.4546  \\ 
\cline{2-9}
 & OWL with SVM 5 & 0.5317 & 0.2168 & 0.1666 & 0.2165 & 0.5690 & 0.5556 & 0.5511  \\ 
\hline \hline \hline
\multirow{5}{*}{25} & No adaption (control case) &  0.3243 & 0.1353 & 0.1052 & 0.1332 & 0.3417 & 0.3367 & 0.3310   \\ 
\cline{2-9}
 & With label (upper bound) & 0.5656 & 0.2550 & 0.2029 & 0.2560 & 0.5788 & 0.5800 & 0.5759  \\ 
\cline{2-9}
 & OWL without SVM & 0.4140 & 0.1885 & 0.1463 & 0.1778 & 0.4203 & 0.4243 & 0.4228  \\ 
\cline{2-9}
 & OWL with SVM 1 & 0.3738 & 0.1615 & 0.1367 & 0.1810 & 0.4150 & 0.4650 & 0.3644  \\ 
\cline{2-9}
 & OWL with SVM 5 & 0.4842 & 0.1831 & 0.1427 & 0.1829 & 0.4895 & 0.4957 & 0.5034  \\ 
\hline \hline \hline
\multirow{5}{*}{50} & No adaption (control case) &  0.3090 & 0.1254 & 0.08851 & 0.1184 & 0.3288 & 0.3299 & 0.3318   \\ 
\cline{2-9}
 & With label (upper bound) & 0.5033 & 0.2142 & 0.1735 & 0.2080 & 0.5214 & 0.5245 & 0.5156  \\ 
\cline{2-9}
 & OWL without SVM & 0.3907 & 0.1668 & 0.1206 & 0.1585 & 0.4082 & 0.4043 & 0.4024  \\ 
\cline{2-9}
 & OWL with SVM 1 & 0.3238 & 0.1445 & 0.0976 & 0.1511 & 0.3647 & 0.4032 & 0.3354  \\ 
\cline{2-9}
 & OWL with SVM 5 & 0.4213 & 0.1629 & 0.1114 & 0.1526 & 0.4377 & 0.4533 & 0.4529  \\ 
\hline 
\end{tabular}
\end{center}}
\label{table_open_world_score_1000}
\end{table*}

\section{Evaluation Framework and Data}
\label{sec_evaluation_framework}

Prior evaluations of open-world learning in \cite{bendale2015towards,rudd2017extreme}, were fundamentally flawed for two reasons.   In both cases the system was provided with all labels for all new data, hence while they could detect unknowns, they did not have the potential cascade of error because a point incorrectly classified as known was not labeled.  Because they did not combine the unknown detection with the labeling step we do not consider it true open-world learning. Second point is that in \cite{rudd2017extreme} they used feature extractors that were trained on ImageNet 2012 \cite{krizhevsky2012imagenet}, but then they artificially defined subsets of the 1000 classes as the base of knowns and incrementally tried to detect other ImageNet 2012 classes as the unknowns. Thus, their feature space was trained using the "unknowns" as known and hence not a meaningful framework for proper open-world evaluation, even in a supervised setting. Therefore, we require a new evaluation framework, even for supervised open-world learning agents, and we do not reproduce data/tables from those prior works because of their flaws. 

To evaluate and compare the performance of an open-world learning algorithm in the task of image classification, (1) we use all 1000 classes of the ImageNet 2012 \cite{krizhevsky2012imagenet} training data set for training agents, (2) we use combinations of the test data set of ImageNet V2 (known classes) and 166 classes of ImageNet 2010 training data set that do not overlap with ImageNet 2012 (unknown classes) \cite{jafarzadeh2021automatic}. 

We define four levels of tests: varying the number of instances per class and the number of unknown classes. Each test consists of 50 batches, where the batch size was 100 images randomly sampled from the  2500 known images and 2500 unknown images for that test. Test U5 has 5 unknown classes, where each unknown class has 500 images. Test U10 has 10 unknown classes, where each unknown class has 250 images. Test U25 uses 25 unknown classes, where each unknown class has 100 images. Finally, Test U50 uses 50 unknown classes, where each unknown class has 50 images. Known classes, unknown classes, and images in each class are selected randomly. All images, known and unknown, are distributed randomly across each test.   We run each test five times and report the average and standard deviation of OWM.

\section{Experimental Results}

To get results for TOWL in Tables ~\ref{table_compare_baselines_1} -    \ref{table_open_world_score_1000}, we trained three EfficientNet-B3 networks: (1) supervised learning on all 1000 classes of ImageNet 2012 using supervised cross-entropy loss, (2) unsupervised learning on all 1000 classes of ImageNet 2012 data set \cite{krizhevsky2012imagenet} using MoCO V2 \cite{he2020momentum}, and (3) unsupervised learning on all classes of Places training data set\cite{zhou2017places} using MoCO V2 \cite{he2020momentum}. We used EfficientNet-B3 \cite{tan2019efficientnet} from Timm library \cite{timm}. Then we extracted features from the latent layer, i.e., the  layer before Logit, and froze them.

Table ~\ref{table_compare_baselines_1} compares the seven proposed fully managed baselines with other combinations of incremental learning. By comparing Table \ref{table_compare_baselines_1} with Tables \ref{table_compare_baselines_2}, \ref{table_compare_baselines_3}. our TOWL outperforms other variants without novelty management when the number of unknown classes is 5, 10, and 25. The feature that is used in the table is the concatenation of frozen feature extractors that were trained on ImageNet 2012  (supervised) and Places 365-standard  (using MoCo v2).  The best result for each group of unknowns shows in bold or italics.  Using a two-sided paired t-test for each batch/size,  the bold results are statistically significant with $p < 0.05$.  Comparing between  Tables \ref{table_compare_baselines_1} and  \ref{table_compare_baselines_2}, and the same type of test,   the improvements using management for every algorithm was statistically significant with $p < 0.01$.

Table ~\ref{table_open_world_score_1000} compares the performances of the various concatenation of self-supervised and supervised deep features and report the best performing feature fusion variant. TOWL-FEVM when using fused features, concatenation of frozen feature extractors were trained on ImageNet 2012  (supervised, cross-entropy) and Places 365-standard  (unsupervised, MoCo v2), outperforms TOWL-FEVM when it used just supervised feature extractors were trained on ImageNet 2012.  All of the results in the paper show that TOWL-FEVM, when using fused features, is superior to using either just supervised features or using just unsupervised features.

\begin{figure}[!t]
\centering
\includegraphics[width=0.7\linewidth]{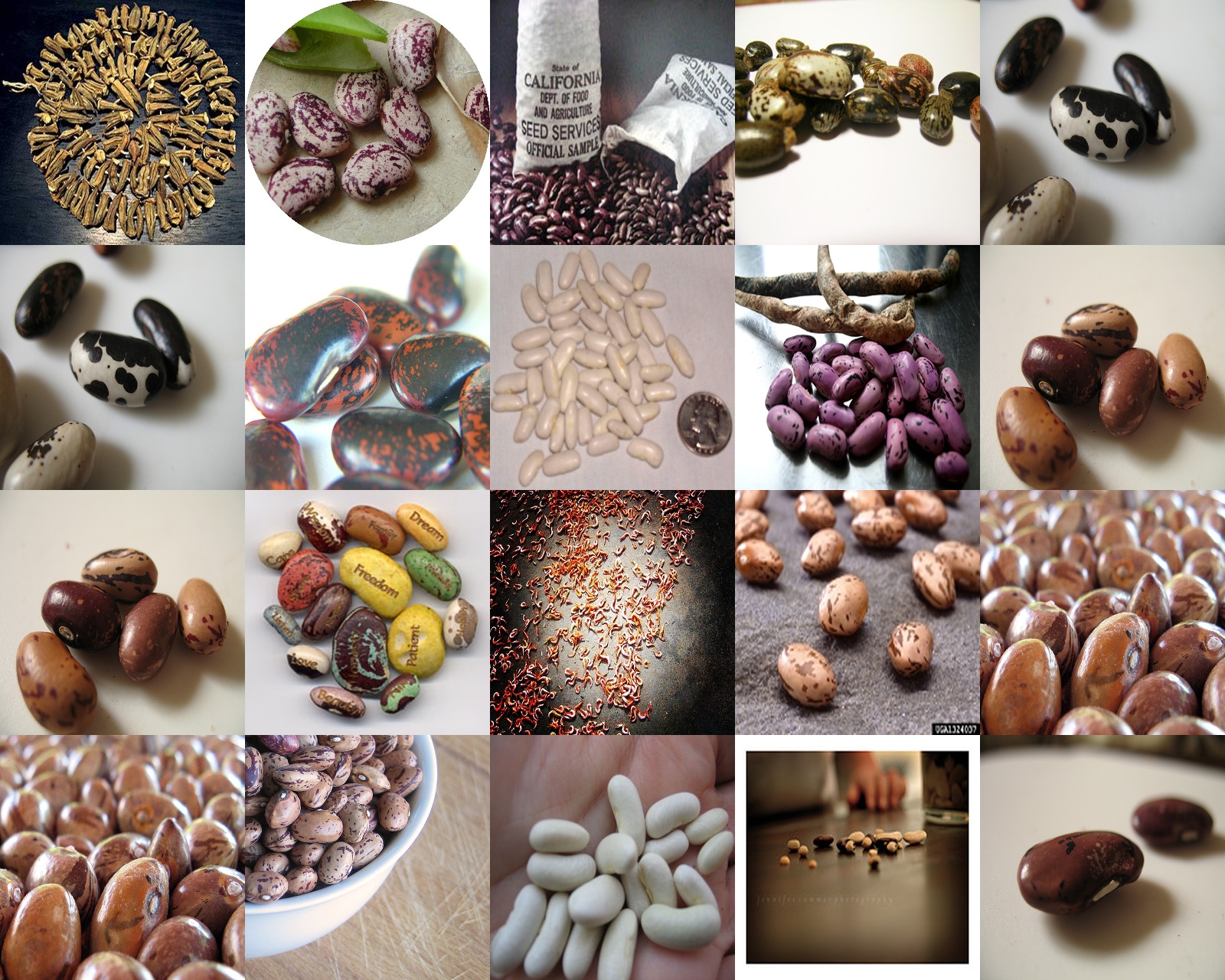}\\

(a) A good cluster not using SVM quality measure\\

\includegraphics[width=0.7\linewidth]{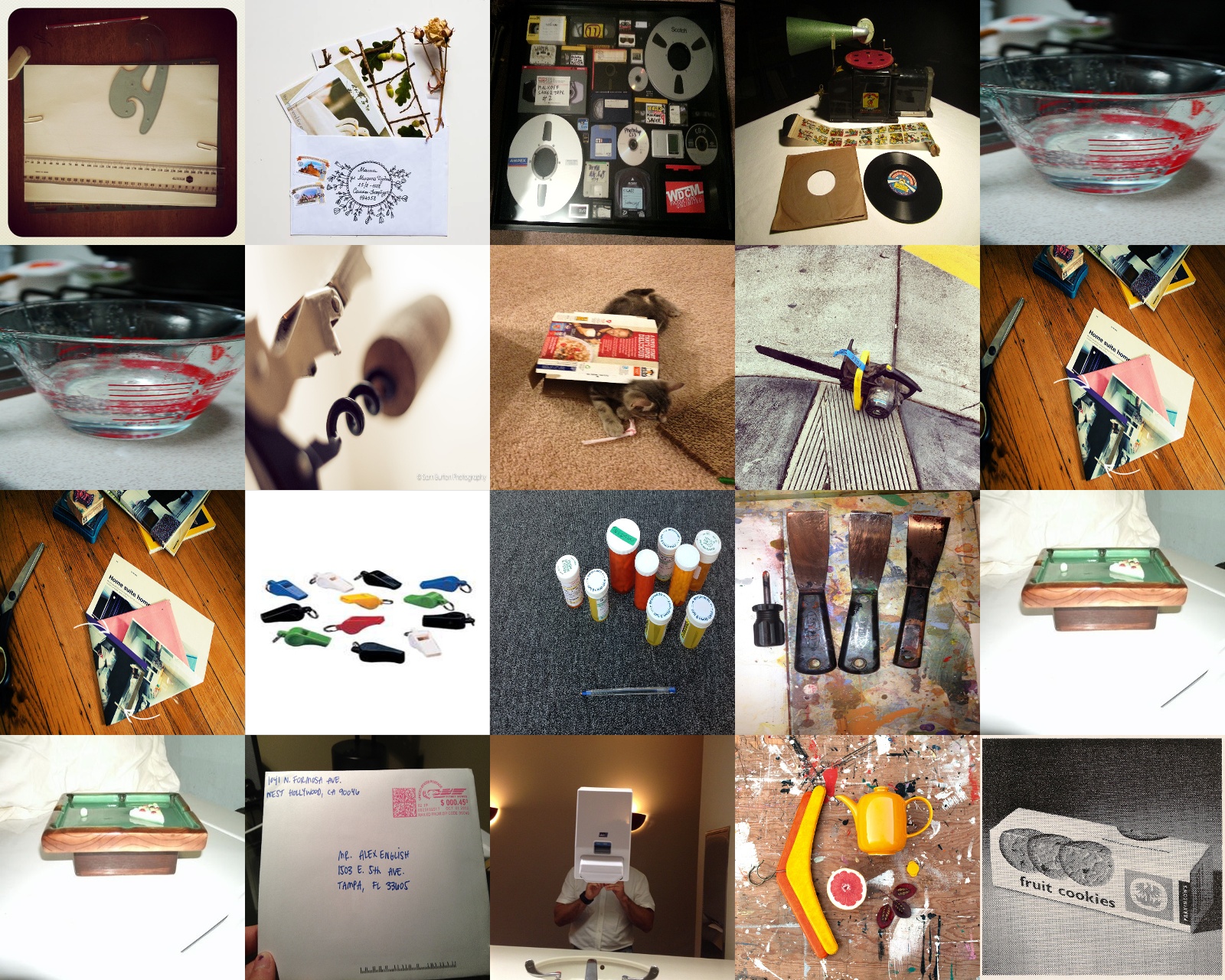}\\

(b) A bad cluster not using SVM quality measure \\

\includegraphics[width=0.7\linewidth]{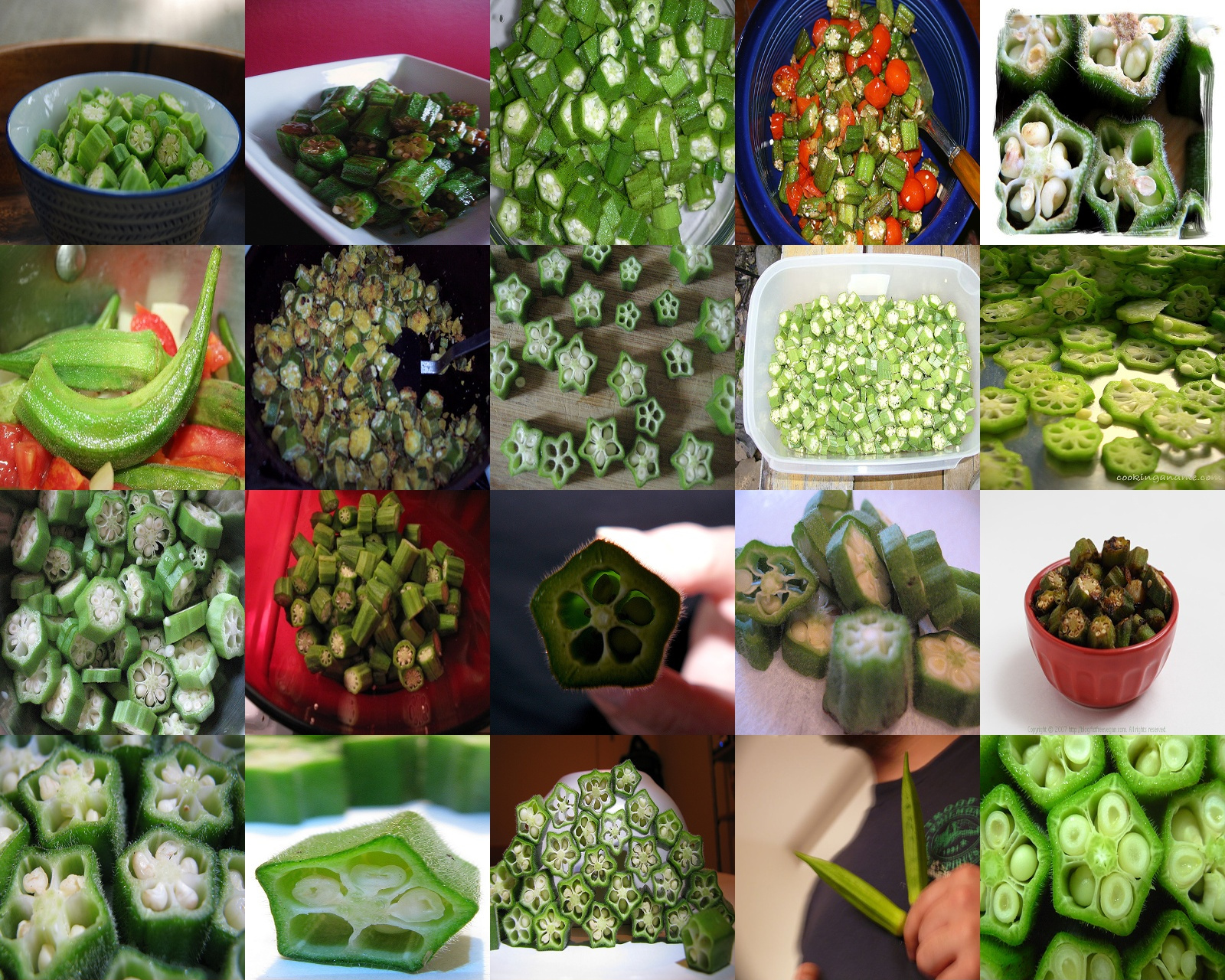}\\

(c) A good cluster using SVM quality measure\\

\includegraphics[width=0.7\linewidth]{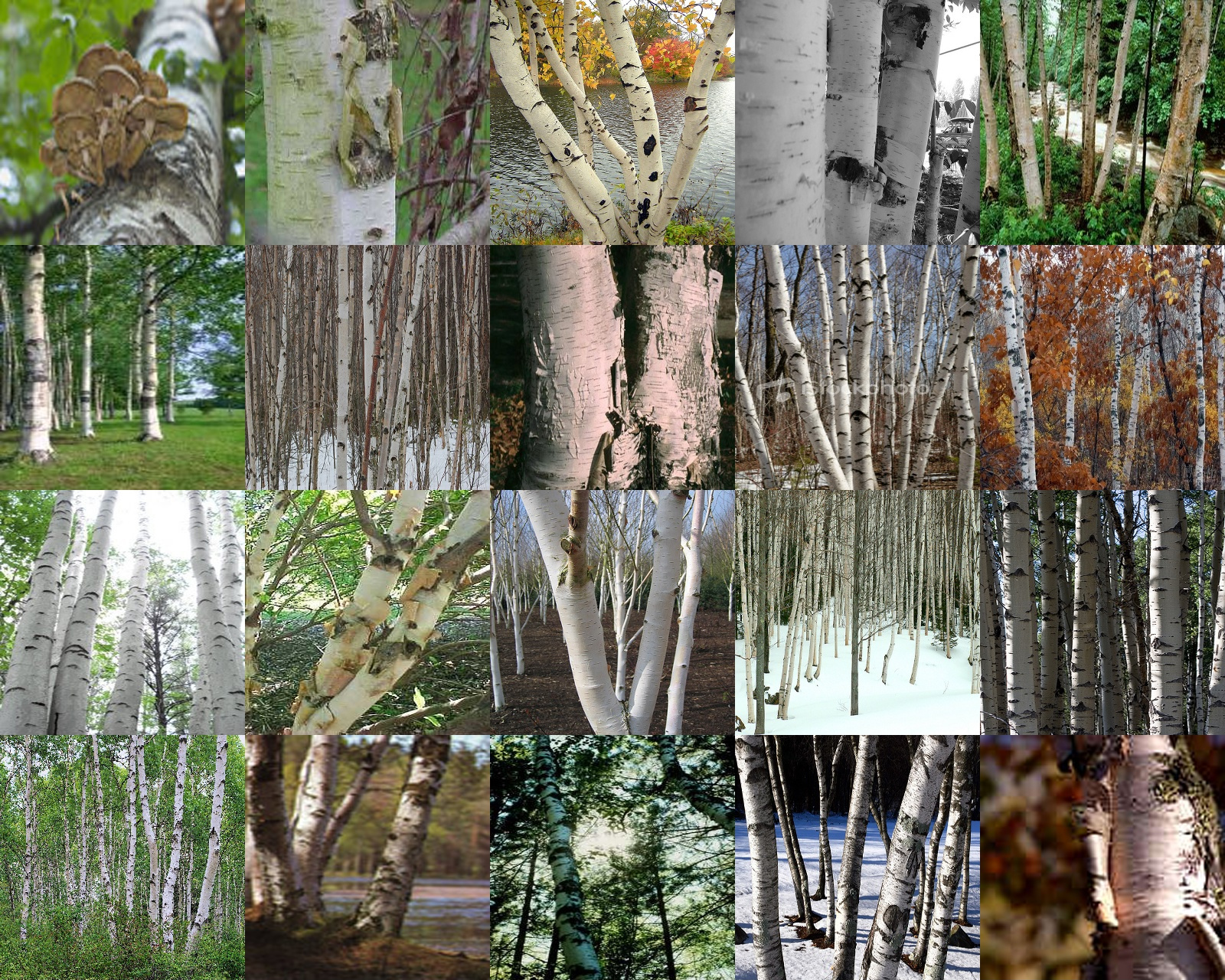}\\

(d)  A bad cluster even using SVM quality measure\\

\caption{Examples of both bad and good clusters without and with suing SVM quality measure for rejection}
\label{fig_cluster} 
\end{figure}

\section{Discussion}

Novelty discovery is a critical component of open-world learning without labels.  Producing too few clusters is dangerous as it will cause two classes to merge, and once merged, the current approach cannot separate; thus, the confusion is permanent. Over clustering will cause the new class not to generalize to the original class's full semantic concept. Therefore, selecting a partition with the proper number of clusters is necessary. Because the number of unknown classes in each batch is not constant, algorithms such as K-Means are not proper. Algorithms such as adaptive K-Means \cite{bhatia2004adaptive} and AutoNovel \cite{han2019autonovel} are slow and should be avoided.  In this paper, we used the recent state-of-the-art Finch clustering algorithm \cite{sarfraz2019efficient} to discover the class of predicted instances as unknown by MEVM. The Finch generates several partitions.   Finch optimized the cluster size in each batch independently. Our tests used the First Partition (FP) Finch partition with the maximum number of clusters because the threshold 50 to clustering was small. We also tested all algorithms with the second partition and used a two-sided paired t-test over all runs and found no statistical difference. The smallest p-value was 0.20 well above the 0.05 needed for significance. Future work should evaluate this choice with other partitions and algorithms and ideally develop a fully automatic algorithm. We encourage future researchers who find state-of-the-art real-time novelty discovery algorithms or clustering to replace Finch with those real-time algorithms.

In this paper, we used MEVM with a distance multiplayer of 0.45. Also, we set the scale of OGMM to 10, the scale of ONNO to 2, and the distance threshold of OCBCL to 10. These parameters were fixed and not varied during testing. While these values were good for EfficientNet-B3 to be evaluated on ImageNet, they should be re-evaluated on validation data for other data sets. Another parameter is how many detected novel class points are needed to begin clustering. Here, we chose a threshold of 250 to start clustering. If we chose a higher threshold, the quality of clusters will increase, and the learning speed will decrease. Therefore, there is a trade-off between the quality of learning and the speed of learning. The minimum value for these thresholds depends on the clustering algorithm and the quality of the feature extractor. We chose a threshold of four clusters to start checking with the SVM measure. We chose the threshold of 20 new points in a cluster to instantiate a new class, so few, but not one-shot unsupervised learning. If the number of required samples was larger, the new class will be better defined and generalized better; however, again, the learning speed decreases. Thus, there is a trade-off between the quality of learning and the speed of learning.

Fig. ~\ref{fig_cluster} demonstrates four clusters generated during the tests, two with and two without using SVM quality measures in the novelty management. The algorithms generated a few good clusters regardless of the presence of the SVM. When we did not use SVM, some garbage clusters passed novelty management because even though their points were far from each other, they still were the first nearest neighbors of each other and hence clustered with Finch. Even with  SVM, some clusters were not good when considering the ground-truth labels. For example, one cluster created and learned by the TOWL is tree trunks and another is tree tops. The labels in these clusters do not match with the human label in ImageNet 2010 (unknown data set), where the trees were labeled by genus/species.  These clusters caused a loss of score in the reported Tables. Therefore, the proposed metric by itself is not sufficient for evaluation. In future work and or operational systems, human labeling should be used to validate the results of the comparison.

\section{Conclusion}

In open-world learning, an autonomous agent discovers, manages novelty, and learns new classes from a non-stationary stream of data. Open-world learning is fundamentally different from open-set recognition, incremental learning, generalized novelty discovery, and generalized zero-shot learning. In an open-world environment, labels for data are often unavailable, and hence supervised open-world learning is not a scalable solution for online or real-time applications.

Here we formalized the unsupervised open-world learning problem and surveyed related work and potential components from the literature. We created a framework to evaluate autonomous agent performance in open-world scenarios and a new open-world metric suited for the evaluation of unsupervised open-world learning.  We showed that open-world learning needs more than just detection, clustering, and incrementally learning unknowns. We extended six prior incremental learning algorithms to solve open-world learning problems. The six algorithms are the Open-world Nearest Classifier Mean (ONCM), Open-world Nearest Non Outlier (ONNO),Open-world Gaussian Mixture Model (OGMM),Open-world Centroid-Based Concept Learning (OCBCL),Open-world SCAling Incremental Learning (OSCAIL), and Modified Extreme Value Machine (MEVM). Then we extended them further to build seven new True Open-World Learners (TOWL), autonomous agents that discover, characterize, manage, and learn new classes without labels from an open-world stream of data. TOWL provides  seven  baselines to unsupervised open-world learning. We found that our TOWL produced the best results for unsupervised open-world learning when it used the combination of supervised and unsupervised trained features. Finally, we illustrated that the proposed metric by itself is not sufficient and for precise comparison if the goal is matching human labels.

\bibliographystyle{IEEEtran}
\bibliography{owl_ref}

\setcounter{section}{0}
\setcounter{table}{0}
\setcounter{figure}{0}

\renewcommand{\thesection}{S\arabic{section}} 
\renewcommand{\thesubsection}{\thesection.\arabic{subsection}}

\makeatletter
\makeatletter \renewcommand{\fnum@figure} {\figurename~S\thefigure} 
\makeatother

 \makeatletter
 \makeatletter \renewcommand{\fnum@table}
 {\tablename~S\thetable}
 \makeatother
 
\renewcommand{\theequation}{S\arabic{equation}}

\section{Deep features generally yield bounded open-space risk}
In prior work on open-set and open-world learning, \cite{scheirer2012toward,bendale2015towards}, it required algorithms have bounded open-space risk. This is important With classifiers in unbounded spaces, but we show it is not with a "deep learning" approach and deep features.

\begin{theorem}{Deep features support bounded open-space risk}
Assume  $x$ is an input of dimension $r$ from a bounded range of inputs $[l,u]^d\subset\reals^r$. Then any deep network computing using only finite combinations of bounded activation functions, the resulting feature space $F(x):\reals^r\mapsto:\reals^d$ always supports classification with bounded open space risk.
\end{theorem}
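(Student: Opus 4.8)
\emph{Proof proposal.} The plan is to reduce the claim to the elementary topological fact that a continuous function maps compact sets to bounded sets, and then to quote the definition of bounded open-space risk from \cite{scheirer2012toward,bendale2015towards}, under which a recognition function manages open-space risk once its ``known''/accept region is contained in a bounded ball of feature space. So the whole proof is: (i) show that the set of features that can ever arise is bounded, and (ii) observe that this is exactly what is needed.

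For step (i), I would first note that the admissible input set $[l,u]^r$ is compact. Every operation in a standard feed-forward network --- affine maps (fully connected and convolutional layers, inference-time batch/layer normalization, residual additions), spatial pooling, and pointwise activations --- is continuous, and a \emph{bounded} activation has, by hypothesis, bounded range. Hence the finite composition $F$ is continuous on $\reals^r$, so $F\!\left([l,u]^r\right)$ is a continuous image of a compact set and is therefore contained in some closed ball $B_R = \{f \in \reals^d : \|f\| \le R\}$. Alternatively, one need not even invoke compactness of the input box: because the activations are bounded, each hidden representation is either a bounded-range activation output or an affine image of one, hence bounded, and the extracted feature inherits a bound --- this is the route that uses the hypothesis as literally stated.

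For step (ii), I would take any classifier $C$ defined on feature space (for instance one of the thresholded nearest-mean or EVM variants used later in the paper) and modify it to $C'$ that outputs ``unknown'' for every $f \notin B_R$ and agrees with $C$ on $B_R$. Since no admissible input maps to a feature outside $B_R$, $C'$ behaves identically to $C$ on all data the agent can encounter, so accuracy on real inputs is untouched; but the region on which $C'$ predicts a known class now lies inside the bounded set $B_R$, so $C'$ has bounded open-space risk in the sense of \cite{scheirer2012toward}. This exhibits a classifier over $F$ with bounded open-space risk, which is all the theorem asserts.

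I do not expect a genuine obstacle here; the work is bookkeeping. The one thing that needs care is enumerating the primitive operations in the network family under consideration and checking that each is continuous (and, where relevant, bounded-range), so that the compactness/boundedness argument actually covers the architectures used in the paper. A secondary point to state cleanly is that the theorem only claims the existence of \emph{some} classifier on $F$ with bounded open-space risk --- it does not claim that every classifier has this property, nor anything about behavior on unreachable features --- which is precisely the contrast with, e.g., an unthresholded linear classifier in raw pixel space, whose accept region is an unbounded half-space.
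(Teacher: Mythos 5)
Your proposal is correct and follows essentially the same route as the paper's own proof: establish that bounded inputs and bounded activations force the feature image into a bounded box, then wrap any classifier $C$ into a $C'$ that rejects everything outside that box, which bounds the open-space risk by construction. Your version is somewhat more careful about which hypothesis (compactness plus continuity versus bounded activation ranges) does the work, but the decomposition and the key construction are identical.
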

\begin{proof}
Since a finite combination of bounded transformations is itself finite,  it follows that each dimension of the output is bounded and hence there exists some $L$ and $U$ such that $\forall x\; F(x) \in [L,U]^d$.
Given some original classifier $C(y):\reals^d \mapsto {0,\ldots,k}$,   be an open-set classifier where  class 0 is for unknowns and $1,\ldots,k$ are the  known classes.  Then to ensure we have bounded open space risk, it is sufficient to define
\begin{equation}
        C'(y) := 
        \begin{cases} 
        C'(y)) & if  y \in \in [L,U]^d.\\
        0 & otherwise. 
        \end{cases}
\end{equation}
\
\end{proof}

\section{Additional Results}

\begin{table*}[!hb]
\caption{Mean and standard deviation  on 5 tests, open-world scores of last 1000 images (10 batches). Feature is concatenation of frozen feature extractors were trained on ImageNet 2012  (supervised) and Places 365-standard (using MoCo v2).}
{\small
\begin{center}
\begin{tabular}{|c|c|c|c|c|c|c|c|c|} 
\hline 
\# Unknown classes & \multicolumn{2}{c|}{5} & \multicolumn{2}{c|}{10} & \multicolumn{2}{c|}{25} & \multicolumn{2}{c|}{50} \\
\hline 
Method & $\mu$ & $\sigma$  &  $\mu$ & $\sigma$  &  $\mu$ & $\sigma$  &  $\mu$ & $\sigma$ \\
\hline \hline
SM OOD  + LC + NCM + Finch SP  &  0.4923 & 0.0086 & 0.4479 & 0.0130 & 0.4211 & 0.0036 & 0.4178 & 0.0061  \\
\hline
Energy OOD  + LC + NCM + Finch SP  &  0.4584 & 0.0133 & 0.4272 & 0.0103 & 0.4105 & 0.0056 & 0.4127 & 0.0050  \\
\hline
EVM OOD  + LC + NCM + Finch SP  &  0.4880 & 0.0110 & 0.4389 & 0.0091 & 0.4139 & 0.0068 & 0.4108 & 0.0039  \\
\hline
SM OOD  + LC + NNO + Finch SP  &  0.5093 & 0.0086 & 0.4698 & 0.0153 & 0.4546 & 0.0055 & 0.4585 & 0.0057  \\
\hline
Energy OOD  + LC + NNO + Finch SP  &  0.4627 & 0.0125 & 0.4370 & 0.0117 & 0.4230 & 0.0073 & 0.4271 & 0.0076  \\
\hline
EVM OOD  + LC + NNO + Finch SP  &  0.4977 & 0.0100 &  0.4589 & 0.0131 & 0.4398 & 0.0075 & 0.4418 & 0.0041 \\
\hline
SM OOD  + LC + GMM + Finch SP  &  0.4891 & 0.0075 & 0.4418 & 0.0124 & 0.4195 & 0.0036 & 0.4174 & 0.0062  \\
\hline
Energy OOD  + LC + GMM + Finch SP  &  0.4579 & 0.0130 & 0.4268 & 0.0100 & 0.4095 & 0.0050 & 0.4127 & 0.0050  \\
\hline
EVM OOD  + LC + GMM + Finch SP  &  0.4860 & 0.0097 & 0.4356 & 0.0107 & 0.4112 & 0.0046 & 0.4101 & 0.0042  \\
\hline
SM OOD  + LC + CBCL  + Finch SP  &  0.5518 & 0.0215 & 0.4787 & 0.0388 & 0.4394 & 0.0056 & 0.4286 & 0.0069  \\
\hline
Energy OOD  + LC + CBCL  + Finch SP  &  0.4662 & 0.0157 & 0.4396 & 0.0145 & 0.4177 & 0.0046 & 0.4189 & 0.0063  \\
\hline
EVM OOD  + LC + CBCL  + Finch SP  & 0.5282 & 0.0098 & 0.4769 & 0.0246 & 0.4208 & 0.0072 & 0.4212 & 0.0061  \\
\hline
SM OOD  + LC + SCAIL  + Finch SP  &  0.5381 & 0.0326  &  0.4839 & 0.0142  &  0.4583 & 0.0139  &  0.4604 & 0.0076  \\
\hline
Energy OOD  + LC + SCAIL  + Finch SP  &  0.4631 & 0.0137 & 0.4388 & 0.0099  &  0.4249 & 0.0129 & 0.4292 & 0.0099  \\
\hline
EVM OOD  + LC + SCAIL  + Finch SP  &  0.5342 & 0.0151  &  0.4809 & 0.0203  &  0.4400 & 0.0112  &  0.4460 & 0.0066  \\
\hline
SM OOD  + LC + EVM  + Finch SP  &  0.4971 & 0.0213 & 0.4665 & 0.0200 & 0.4711 & 0.0060 & 0.4855 & 0.0111  \\
\hline
Energy OOD  + LC + EVM  + Finch SP  &  0.4493 & 0.0141 & 0.4408 & 0.0101 & 0.4386 & 0.0093 & 0.4421 & 0.0055  \\
\hline
EVM OOD  + LC + EVM  + Finch SP  &  0.4835 & 0.0192 & 0.4599 & 0.0192 & 0.4560 & 0.0077 & 0.4673 & 0.0007  \\
\hline
SM OOD  + EVM    + EVM  + Finch SP  &  0.4582 & 0.0262 & 0.4254 & 0.0240 & 0.4317 & 0.0078 & 0.4465 & 0.0113  \\
\hline
Energy OOD  + EVM    + EVM  + Finch SP  &  0.4119 & 0.0156 & 0.4028 & 0.0143 & 0.4010 & 0.0088 & 0.4051 & 0.0061  \\
\hline
EVM  OOD  + EVM    + EVM  + Finch SP  &  0.4481 & 0.0185 & 0.4233 & 0.0247 & 0.4195 & 0.0037 & 0.4326 & 0.0034  \\
\hline
Full EVM  + Finch SP  &  0.4717 & 0.0304 & 0.4336 & 0.0210  &  0.4437 & 0.0068 & 0.4562 & 0.0088  \\
\hline
\end{tabular}
\end{center}}
\label{table_compare_baselines_3}
\end{table*}

\end{document}